\title{\LARGE \bf
An Efficient Method for Extracting the Shortest Path from the Dubins Set for Short Distances Between Initial and Final Positions}
\author{Xuanhao Huang and Chao-Bo Yan$^*$
\thanks{The authors are with the State Key Laboratory for Manufacturing Systems Engineering, and also with the School of Automation Science and Engineering, Faculty of Electronic and Information Engineering, Xi'an Jiaotong University, Xi'an, Shaanxi
710049, China. {\tt\scriptsize xhhuang@stu.xjtu.edu.cn; chaoboyan@mail.xjtu.edu.cn}}
}
\newtheorem{theorem}{Theorem}
\newtheorem{proposition}{Proposition}
\begin{document}

\maketitle
\thispagestyle{empty}
\pagestyle{empty}

\begin{abstract}
Path planning is crucial for the efficient operation of Autonomous Mobile Robots (AMRs) in factory environments. Many existing algorithms rely on Dubins paths, which have been adapted for various applications. However, an efficient method for directly determining the shortest Dubins path remains underdeveloped. This paper presents a comprehensive approach to efficiently identify the shortest path within the Dubins set. We classify the initial and final configurations into six equivalency groups based on the quadrants formed by their orientation angle pairs. Paths within each group exhibit shared topological properties, enabling a reduction in the number of candidate cases to analyze. This pre-classification step simplifies the problem and eliminates the need to explicitly compute and compare the lengths of all possible paths. As a result, the proposed method significantly lowers computational complexity. Extensive experiments confirm that our approach consistently outperforms existing methods in terms of computational efficiency.

\end{abstract}

\section{Introduction}
Autonomous Mobile Robots (AMRs) are playing an increasingly important role in the field of logistics transportation. With the global advancement of ``Industry 4.0'', the significance of manufacturing is becoming increasingly pronounced. This trend underscores the need for AMRs to enhance efficiency and reduce operational expenses. 

Path planning is critical for the effective operation of AMRs. In the field of robotic path planning, there are many interpolating curves, such as the Dubins paths, the Bezier curves, and the polynomial curves \cite{gonzalez2015review}. Each of these algorithms has its own strengths. Compared to other paths, the primary advantage of the Dubins paths is that they are proven to be the shortest while satisfying the kinematic constraints of the vehicle \cite{dubins1957curves}. Furthermore, its simple form makes it easy to apply in practice. Based on the aforementioned advantages, the Dubins paths are favored by many researchers and have been modified to achieve different objectives \cite{oliveira2018trajectory, tian2021continuous, schildbach2023continuous, pharpatara20153d, park2022three}.

Dubins' theorem states that any shortest path consists of three segments, each being either a circular arc (C) or a straight line segment (S), forming a sequence of type \textit{CCC} or \textit{CSC}. Each arc \textit{C} corresponds to either a left turn (L) or a right turn (R). Thus, the shortest path is one of six admissible paths, collectively termed the Dubins set: $\{LRL, RLR, LSL, RSR, RSL, LSR\}$, as shown in Fig. \ref{fig:DubinsPathType}. 

\begin{figure}[htbp]
	\centering
	\includegraphics[width=5cm]{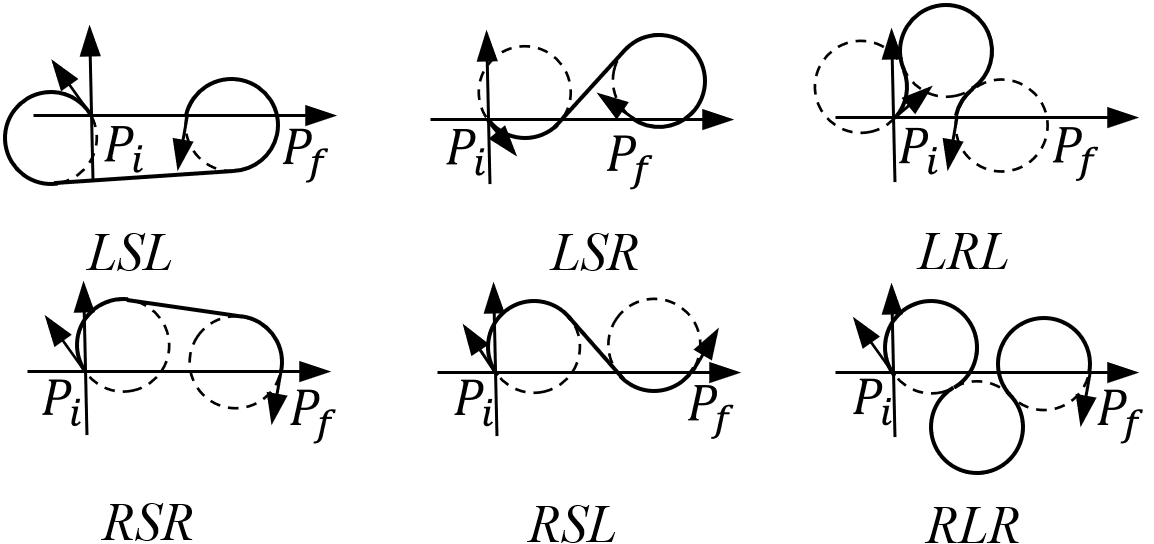}
	\caption{Dubins paths}
	\label{fig:DubinsPathType}
\end{figure}

A natural approach to applying Dubins' result for shortest path calculation is to explicitly compute the lengths of all candidate paths in the Dubins set and then select the shortest one \cite{beard2012small}. Although computing a single Dubins path is very fast, this method entails a significant computational burden if the cumulative cost matters for a massive amount of repeated computations. Despite optimizations for real-time planning, a 3D path planning algorithm takes over 2 seconds to determine a time-optimal path, with 80\% of the time consumed by repeated calculation of Dubins paths \cite{oettershagen2017towards}.

To mitigate the problem of high computational cost, researchers have proposed alternative solutions. Shkel et al. \cite{shkel2001classification} proposed a scheme to efficiently extract the shortest Dubins path when the distance between the initial and final points is large. Cho et al. \cite{cho2006efficient} developed new formulae to explicitly determine the connection points between the circular arc and the straight-line segment in Dubins' \textit{CSC} family of paths. Sadeghi et al. \cite{sadeghi2016efficient} addressed the challenge of computing Dubins paths through three consecutive points by establishing novel geometrical properties of these paths. Additionally, heuristic algorithms are also applied to the construction of Dubins paths \cite{ji20213d, nayak2023heuristics}.

Although computing the length of an individual Dubins path is computationally inexpensive, identifying the shortest path pattern among all possible configurations remains a challenging task. While the shortest Dubins path for relatively large distances has been well studied and effectively resolved, the case involving short distances remains open due to its inherent complexity. To the best of our knowledge, no existing method efficiently addresses this specific scenario.

To develop a complete method capable of handling both cases, we propose a method to efficiently select the shortest path from the Dubins set for case with short distance. Built upon \cite{shkel2001classification}, we classify the initial and final configurations into equivalency groups according to the angle quadrants of the orientation angle pairs and demonstrate how to extract the shortest path for a configuration for each group. The shortest path for different configurations within the same group can be determined by using an orthogonal transformation. Our approach significantly enhances the efficiency of identifying the shortest Dubins path, without explicitly calculating the lengths of all candidates of the Dubins set.

The rest of this paper is arranged as follows. Section \ref{sec:PROBLEM STATEMENT} introduces definition of the short path and properties of the Dubins paths. Section \ref{sec:Complete Calculation of Dubins Curve} presents the extracting method and detailed proofs. Section \ref{sec:Experimental Results} illustrates our main results, indicating the performance of the method. Conclusions and future work are presented in Section \ref{sec:CONCLUTIONS}.

\section{Problem Statement} \label{sec:PROBLEM STATEMENT}

Dubins paths exhibit several fundamental properties that facilitate the analysis. First, the threshold distance between the initial and final positions is introduced, allowing the classification of Dubins paths into the \textit{long path case} and the \textit{short path case}. Subsequently, the lengths of individual path segments are specified. Finally, the Dubins paths are categorized into equivalency groups based on their topological structure. Without loss of generality, we assume a unit radius of the minimal turning circle, i.e., $r = 1$.

The properties of Dubins paths are crucial to the subsequent computation. Following \cite{dubins1957curves}, an admissible path is either (i) a circular arc, followed by a line segment, followed by a circular arc, or (ii) a sequence of three circular arcs, or (iii) a sub-path of a path of type (i) or (ii). Furthermore, \cite{dubins1957curves} states that for a path to be considered optimal, each arc must have the minimum permissible radius.

The lengths of the individual segments of Dubins paths are computed based on the aforementioned properties. To simplify the analysis, a coordinate transformation is applied such that the initial position is relocated to the origin and the final position is set at $(d, 0)$, where $d$ denotes the Euclidean distance between the initial and final positions. The initial and final orientation angles are denoted by $\alpha$ and $\beta$, respectively, as shown in Fig. \ref{fig:AfterTransformation}.

\begin{figure}[htbp]
	\centering
	\includegraphics[width=3.5cm]{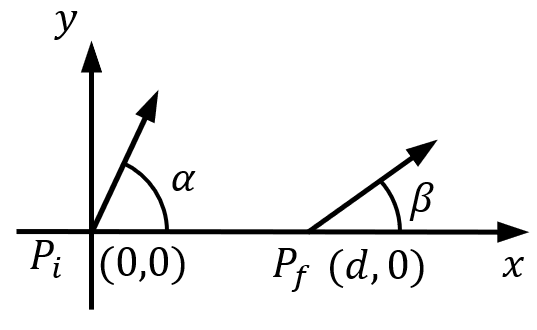}
	\caption{The initial as position $(0, 0)$ with start orientation $\alpha$ and final as position $(d, 0)$ with end orientation $\beta$}
	\label{fig:AfterTransformation}
\end{figure}

\subsection{Short Path Case}

The classification problem can be analyzed more effectively by dividing it into two distinct cases, as illustrated in Fig. \ref{fig:Two cases}. This study specifically focuses on the short-path case. To support this classification, the concept of a threshold distance is introduced.

\begin{figure}[htbp]
	\centering
	\subfigure[The short path case]{
		\includegraphics[width=3.5cm]{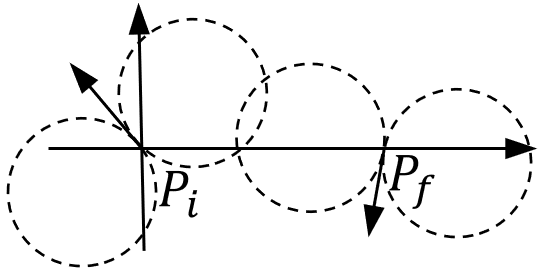}
		\label{fig:ShortPathCase}}
	\subfigure[The long path case]{
		\includegraphics[width=3.5cm]{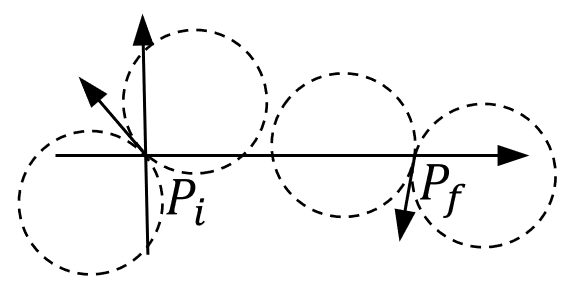}
		\label{fig:LongPathCase}}
	\caption{Two cases of Dubins paths}
	\label{fig:Two cases}
\end{figure}

The initial and final segments of the Dubins paths are circular arcs, denoted $C_{il}$, $C_{ir}$, $C_{fl}$, and $C_{fr}$, where $i$ and $f$ stand for ``initial'' and ``final'', and $l$ and $r$ stand for ``left'' and ``right''. The \textit{short path case} occurs when the union $\{C_{il} \cup C_{ir}\}$ and $\{C_{fl} \cup C_{fr}\}$ have a non-empty intersection, i.e., $\{C_{il} \cup C_{ir}\} \cap \{C_{fl} \cup C_{fr}\} \ne \emptyset$. J. Lim et al. \cite{lim2023circling} proposed that the threshold for the short path case is
\begin{equation}
	\label{eq: d_threshold}
    \begin{aligned}
        d_{t}(\alpha, \beta) = |\sin \alpha| + |\sin \beta| + \sqrt{4 - (\cos \alpha + \cos \beta)^2}.
    \end{aligned}
\end{equation}
If $d$ is less than $d_{t}$, the short path case occurs.

\subsection{Lengths of the Dubins paths}

Dubins paths consist of three types of motion: turning to the left, turning to the right, and moving straight. Consequently, three corresponding operators are defined as following: 
\begin{equation}
	\label{eq: operators}
	\begin{aligned}
		& L_v(x, y, \phi) = (x + \sin(\phi + v) - \sin \phi,\\
		& \qquad \qquad \qquad y - \cos(\phi + v) + \cos \phi, \phi + v), \\
		& R_v(x, y, \phi) = (x - \sin(\phi - v) + \sin \phi,\\
		& \qquad \qquad \qquad y + \cos(\phi - v) - \cos \phi, \phi - v), \\
		& S_v(x, y, \phi) = (x + v\cos \phi, y + v \sin \phi, \phi).
	\end{aligned}
\end{equation}

Herein, the operators represent movement along a segment of length $v$ in the corresponding direction from an arbitrary configuration $(x, y, \phi)$ (where $(x, y)$ represents the position and $\phi$ the orientation).

Using these elementary operators, each segment of a Dubins path can be represented by corresponding analytical expressions. Let $t$, $p$ and $q$ denote the lengths of the initial, middle, and final segments of the path, respectively, with the subscript indicating the type of the path. Let $\mathcal{L}$ represent the length of the corresponding path; for example, $\mathcal{L}_{lsl}$ represents the length of a path of type \textit{LSL} and $t_{lsl}$ refers to the length of its initial segment.

The length of each segment and the total length of \textit{LSL} are derived by applying $L_q(S_p(L_t(0, 0, \alpha))) = (d, 0, \beta)$:
\begin{equation}
	\begin{aligned}
		&t_{lsl} = - \alpha + \arctan \frac{\cos \beta - \cos \alpha}{d + \sin \alpha - \sin \beta}\{\text{mod}~2\pi\}, \\
		&p_{lsl} = \sqrt{2 + d^2 - 2\cos(\alpha - \beta) + 2d(\sin \alpha - \sin \beta)}, \\
		&q_{lsl} = \beta - \arctan \frac{\cos \beta - \cos \alpha}{d + \sin \alpha - \sin \beta}\{\text{mod}~2\pi\}, \\
		&\mathcal{L}_{lsl} = t_{lsl} + p_{lsl} + q_{lsl} = -\alpha + \beta + p_{lsl}.
	\end{aligned}
\end{equation}

The length of each segment and the total length of \textit{RSR} are derived by applying $R_q(S_p(R_t(0, 0, \alpha))) = (d, 0, \beta)$:
\begin{equation}
	\begin{aligned}
		&t_{rsr} = \alpha + \arctan \frac{\cos \alpha - \cos \beta}{d - \sin \alpha + \sin \beta}\{\text{mod}~ 2\pi\}, \\
		&p_{rsr} = \sqrt{2 + d^2 - 2\cos(\alpha - \beta) - 2d(\sin \alpha - \sin \beta)}, \\
		&q_{rsr} = -\beta(\text{mod}~2\pi) + \arctan \frac{\cos \alpha - \cos \beta}{d - \sin \alpha + \sin \beta}\{\text{mod}~ 2\pi\}, \\
		&\mathcal{L}_{rsr} = t_{rsr} + p_{rsr} + q_{rsr} = \alpha - \beta + p_{rsr}.
	\end{aligned}
\end{equation}

The length of each segment and the total length of \textit{LSR} are derived by applying $L_q(S_p(R_t(0, 0, \alpha))) = (d, 0, \beta)$:
\begin{equation}
	\begin{aligned}
		&t_{lsr} = (-\alpha + \arctan(\frac{- \cos \alpha - \cos \beta}{d + \sin \alpha + \sin \beta}) \\
		&\qquad \ \ + \arctan(\frac{2}{p_{lsr}}))\{\text{mod}~ 2\pi\}, \\
		&p_{lsr} = \sqrt{-2 + d^2 + 2\cos(\alpha - \beta) + 2d(\sin \alpha + \sin \beta)}, \\
		&q_{lsr} = -\beta(\text{mod}~2\pi) + \arctan (\frac{- \cos \alpha - \cos \beta}{d + \sin \alpha + \sin \beta}) \\
		&\qquad \ \ - \arctan (\frac{-2}{p_{lsr}})\{\text{mod}~ 2\pi\}, \\
		&\mathcal{L}_{lsr} = t_{lsr} + p_{lsr} + q_{lsr} = \alpha - \beta + 2t_{lsr} + p_{lsr}.
	\end{aligned}
\end{equation}

The length of each segment and the total length of \textit{RSL} are derived by applying $R_q(S_p(L_t(0, 0, \alpha))) = (d, 0, \beta)$:
\begin{equation}
	\begin{aligned}
		&t_{rsl} = \alpha - \arctan(\frac{\cos \alpha + \cos \beta}{d - \sin \alpha - \sin \beta}) \\
		&\qquad \ \ + \arctan(\frac{2}{p_{rsl}})\{\text{mod}~ 2\pi\}, \\
		&p_{rsl} = \sqrt{-2 + d^2 + 2\cos(\alpha - \beta) - 2d(\sin \alpha + \sin \beta)}, \\
		&q_{rsl} = \beta(\text{mod}~2\pi) - \arctan (\frac{\cos \alpha + \cos \beta}{d - \sin \alpha - \sin \beta}), \\
		&\qquad \ \ + \arctan (\frac{2}{p_{rsl}})\{\text{mod}~ 2\pi\}, \\
		&\mathcal{L}_{rsl} = t_{rsl} + p_{rsl} + q_{rsl} = - \alpha + \beta + 2t_{rsl} + p_{rsl}.
	\end{aligned}
\end{equation}

The length of each segment and the total length of \textit{RLR} are derived by applying $R_q(L_p(R_t(0, 0, \alpha))) = (d, 0, \beta)$:
\begin{equation}
	\begin{aligned}
		&t_{rlr} = \alpha - \arctan(\frac{\cos \alpha - \cos \beta}{d - \sin \alpha + \sin \beta}) + \frac{2}{p_{rlr}}\{\text{mod}~ 2\pi\}, \\
		&p_{rlr} = \arccos \frac{1}{8}(6 - d^2 + 2\cos(\alpha - \beta) + 2d(\sin \alpha - \sin \beta)), \\
		&q_{rlr} = \alpha - \beta - t_{rlr} + p_{rlr}\{\text{mod}~ 2\pi\}, \\
		&\mathcal{L}_{rlr} = t_{rlr} + p_{rlr} + q_{rlr} = \alpha - \beta + 2 p_{rlr}.
	\end{aligned}
\end{equation}

The length of each segment and the total length of \textit{LRL} are derived by applying $L_q(R_p(L_t(0, 0, \alpha))) = (d, 0, \beta)$:
\begin{equation}
	\begin{aligned}
		&t_{lrl} = (- \alpha + \arctan(\frac{-\cos \alpha + \cos \beta}{d + \sin \alpha - \sin \beta}) + \frac{2}{p_{lrl}})\{\text{mod}~ 2\pi\}, \\
		&p_{lrl} = \arccos \frac{1}{8}(6 - d^2 + 2\cos(\alpha - \beta) \\
		&\qquad \ \ + 2d(\sin \alpha - \sin \beta))\{\text{mod}~2\pi\}, \\
		&q_{lrl} = - \alpha + \beta(\text{mod}~2\pi) + p_{lrl}\{\text{mod}~ 2\pi\}, \\
		&\mathcal{L}_{lrl} = t_{lrl} + p_{lrl} + q_{lrl} = \alpha - \beta + 2 p_{lrl}.
	\end{aligned}
\end{equation}

\subsection{Equivalency Group}
\label{subsec:Equivalency Group}

The range of possible orientation angles can be partitioned into four quadrants. Each combination of $\alpha$ and $\beta$ is defined as $a_{ij}$, where index $i$ corresponds to the quadrant number of the initial, and index $j$ that of the final orientation. For instance, the case where $\alpha \in [0, \pi/2], \beta \in [\pi, 3\pi/2]$ corresponds to $a_{13}$. 

To categorize the $a_{ij}$ classes, a key property of the Dubins paths is utilized. For any path connecting two configurations, with orientation angles $(\alpha, \beta)$, there exist three other paths that are topologically equivalent. These equivalent paths correspond to the orientation angle pairs $(-\alpha, -\beta)$, $(\beta, \alpha)$, and $(-\beta, -\alpha)$ \cite{shkel2001classification}. Consequently, the $a_{ij}$ classes can be grouped into six independent clusters, termed equivalency groups: $\mathbb{E}_1 = \{a_{11}, a_{44}\}$, $\mathbb{E}_2 = \{a_{12}, a_{21}, a_{34}, a_{43}\}$, $\mathbb{E}_3 = \{a_{13}, a_{31}, a_{24}, a_{42}\}$, $\mathbb{E}_4 = \{a_{14}, a_{41}\}$, $\mathbb{E}_5 = \{a_{22}, a_{33}\}$, $\mathbb{E}_6 = \{a_{23}, a_{32}\}$. To illustrate, consider $a_{13}$ as an example. This class corresponds to the case where $\alpha \in [0, \pi/2]$, $\beta \in [\pi, 3\pi/2]$. Subsequently, $-\alpha \in [-\pi/2, 0]$ and $-\beta \in [-3\pi/2, -\pi]$. The pairs $(-\alpha, -\beta)$, $(\beta, \alpha)$, and $(-\beta, -\alpha)$ correspond to $a_{42}$, $a_{31}$, and $a_{24}$, respectively. 

Classes within the same equivalency group exhibit similar properties. In a Dubins path, the initial and final segments are circular arcs, denoted $C_i$ and $C_f$, respectively. The conjugate of a circular arc $C$ is defined as $\bar{C}$; for example, $\bar{C_{ir}} = C_{il}$, and vice versa. A key theorem related to equivalency groups, which substantially reduces computational cost, states: Given a Dubins path of form $C_i C_f [\alpha, \beta] (t, q)$, there exist three topologically equivalent paths: $\bar{C_i}\bar{C_f}[-\alpha, -\beta](t, q)$, $\bar{C_f}\bar{C_i}[\beta, \alpha](q, t)$, and $C_fC_i[-\beta, -\alpha](q, t)$ \cite{shkel2001classification}.

Once the solution for a single class is determined, the solution for the remaining classes in the group can be efficiently derived using orthogonal transformation. An illustrative example is presented in the subsequent section.

\section{Extraction Method From the Dubins Paths} \label{sec:Complete Calculation of Dubins Curve}

We now prepare to elaborate on how to identify the shortest path from the Dubins set for short distance between initial and final configurations. First, we introduce a key property of \textit{CCC} typed paths, which are feasible only in short path cases, to simplify the subsequent proofs. Subsequently, we derive the corresponding \textit{switching functions} for each class, which uniquely determine the shortest path. Finally, the shortest paths for other classes within the same equivalency group can be efficiently obtained through an orthogonal transformation.

\subsection{The Necessary Condition for \textit{CCC} Typed Path Being the Shortest}

In the short path case, a path of type \textit{CCC} may yield the shortest solution, making it necessary to analyze its properties in detail.

Notably, the initial and final segments of a \textit{CCC} path are arcs with the same orientation. The middle segment is arc with the opposite orientation, tangent to both the two arcs. As illustrated by the solid line in Fig. \ref{fig:PossibleCCCPath}, two candidate arcs exist for the middle segment. Determining which of the two arcs results in the shorter path is therefore essential. 

The centers of the four relevant circles form a rhombus. The relative positions of the initial point $P_i$ and final point $P_f$ with respect to this rhombus provide insight into the choice of the middle arc. If both $P_i$ and $P_f$ lie inside or outside the rhombus, they are considered to be on the same side; otherwise, they are on the opposite sides, as illustrated in Fig. \ref{fig:relative position}.

 \begin{figure}[htbp]
     \centering
     \subfigure[Case of a possible \textit{CCC} path]{
         \includegraphics[width=3cm]{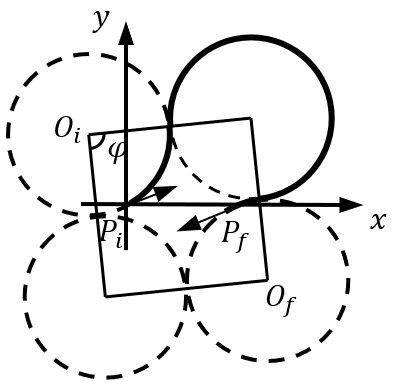}
         \label{fig:PossibleCCCPath1}}
     \subfigure[Case of another possible \textit{CCC} path]{
         \includegraphics[width=3cm]{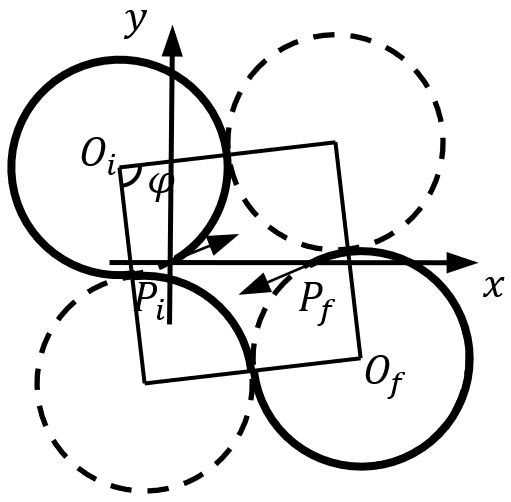}
         \label{fig:PossibleCCCPath2}}
     \caption{Cases of two possible \textit{CCC} paths for a configuration}
     \label{fig:PossibleCCCPath}
 \end{figure}

 \begin{figure}[htbp]
     \centering
     \subfigure[Initial and final points are on the same side]{
         \includegraphics[width=2.9cm]{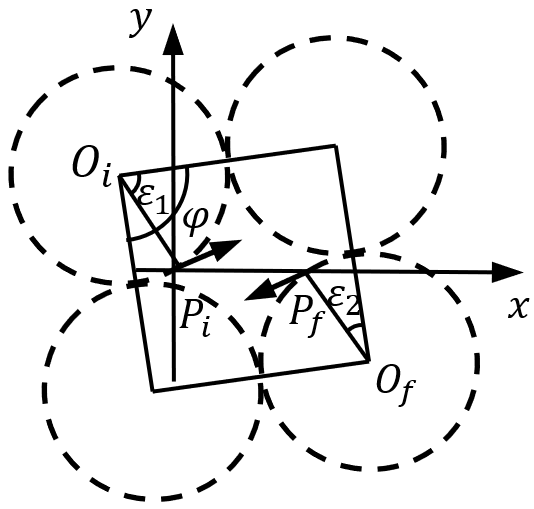}
         \label{fig:InIn}}
     \subfigure[Initial and final points are on the opposite side]{
         \includegraphics[width=3.3cm]{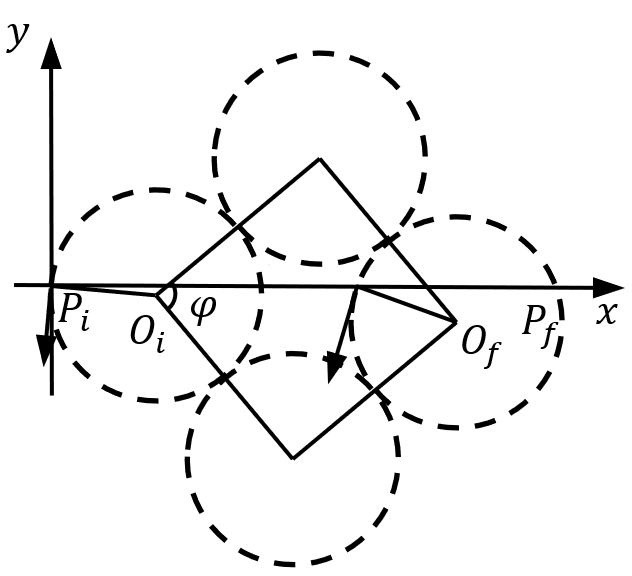}
         \label{fig:OutIn}}
     \caption{Relative position of the rhombus with respect to the initial and final configurations}
	 \label{fig:relative position}
 \end{figure}

\begin{proposition} \label{prop:Necessary Condition of CCC Being Optimal Path}
  The necessary conditions for \textit{CCC} typed path being the shortest are: 1) the initial and final points are on the opposite sides of the rhombus and $\varphi<\pi/2$, or 2) they are on the same side of the rhombus.
\end{proposition}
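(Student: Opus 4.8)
The plan is to reduce the statement to an intrinsic comparison between the two candidate \textit{CCC} arcs, combined with the classical Dubins requirement that an optimal three-arc path has a middle arc strictly longer than a semicircle. First I would fix coordinates on the four circle centers: place the line $O_iO_f$ horizontally with its midpoint at the origin, so that $O_i=(-c,0)$ and $O_f=(c,0)$ with $c=\tfrac12|O_iO_f|$, and the two candidate middle centers at $(0,\pm h)$ with $h=\sqrt{4-c^2}$ (using $r=1$, so every side of the rhombus has length $2$ and existence of the two middle circles requires $|O_iO_f|<4$, which is guaranteed in the short path case). This makes the rhombus of Section~\ref{subsec:Equivalency Group} explicit and shows that the two admissible middle circles are exactly reflections of one another across $O_iO_f$, which is the origin of the two candidate paths in Fig.~\ref{fig:PossibleCCCPath}.

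Next I would write the total length of each candidate as the sum of its three arc angles. Because the circles are pairwise externally tangent, each tangent point is the midpoint of the corresponding center-to-center segment, so the middle-arc angle equals the apex angle $\angle O_iO_mO_f$ of the triangle formed by $O_m$ and the two fixed centers; a short computation gives $\cos\psi = 1-\tfrac12 c^2$, and the initial and final arc angles are read directly off the positions of $P_i$ and $P_f$ on their circles. The key step is then to compare the two candidate lengths by subtracting them: the difference collapses to a monotone function of the signed positions of $P_i$ and $P_f$ relative to the line $O_iO_f$, which is precisely the same-side/opposite-side dichotomy of Fig.~\ref{fig:relative position}. This isolates, for each relative position, which of the two middle circles yields the genuinely shorter \textit{CCC} path.

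With the shorter \textit{CCC} candidate fixed, I would invoke the necessary condition that any optimal \textit{CCC} path has middle-arc length exceeding $\pi$; otherwise the corresponding \textit{CSC} path with the same endpoints is strictly shorter, which can be read off by comparing $\mathcal{L}_{rlr}$ and $\mathcal{L}_{lrl}$ against $\mathcal{L}_{lsl}$, $\mathcal{L}_{rsr}$, $\mathcal{L}_{lsr}$, $\mathcal{L}_{rsl}$. Since $\cos\psi=1-\tfrac12 c^2$ forces the \emph{minor} apex arc to satisfy $\psi\le\pi$ always, the ``middle arc $>\pi$'' requirement is equivalent to the path traversing the \emph{major} arc. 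Translating this back through the rhombus geometry, the same-side configuration automatically routes the traversed middle arc onto the major arc, so condition 2) needs no auxiliary angle constraint; in the opposite-side configuration the middle arc is major exactly when the angle $\varphi$ of Fig.~\ref{fig:relative position} satisfies $\varphi<\pi/2$, which gives condition 1). Assembling the two cases yields precisely the stated necessary conditions.

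The main obstacle I anticipate is the sign bookkeeping in the length difference of the two candidates: the initial and final arc angles depend on the turning directions (\textit{L} versus \textit{R}) and on which side of $O_iO_f$ each of $P_i,P_f$ lies, so the comparison must be carried out with oriented angles taken mod $2\pi$ and cross-checked against the $\{\text{mod}~2\pi\}$ conventions already used in the length formulas. A secondary difficulty is the boundary and degenerate configurations ($P_i$ or $P_f$ on an edge of the rhombus, or $\varphi=\pi/2$), where the two candidates coincide or a \textit{CCC} path degenerates into a \textit{CSC} path; these must be checked separately to confirm that the inequalities claimed are the correct strict versus non-strict ones.
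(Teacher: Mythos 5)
Your overall skeleton matches the paper's: compare the lengths of the two candidate \textit{CCC} paths (one for each admissible middle circle), decide which is shorter according to where $P_i$ and $P_f$ sit relative to the rhombus of centers, and then invoke Dubins' restriction on the middle arc to discard configurations in which only the ``wrong'' candidate is the shorter one. The paper does exactly this, but by an explicit three-way case analysis (both points inside, both outside, opposite sides), obtaining $c_2-c_1=4(\pi-\alpha)$, $c_2-c_1=-4\varphi$, and $c_2-c_1=4(\pi/2-\varphi)$ respectively, and only then applying the middle-arc criterion.

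There are two concrete gaps in your plan. First, you reduce the classification to ``a monotone function of the signed positions of $P_i$ and $P_f$ relative to the line $O_iO_f$,'' but the dichotomy in the proposition (and in Fig.~\ref{fig:relative position}) is \emph{inside versus outside the rhombus}, not above versus below the line through the centers: $P_i$ lies on the unit circle about the vertex $O_i$, and the portion of that circle interior to the rhombus is the wedge-shaped arc facing $O_f$, which straddles the line $O_iO_f$. So the quantity your difference ``collapses to'' is not the quantity the proposition is stated in terms of, and the case split you would derive from it would not be the stated one. Second, the decisive step --- determining, in each relative-position case, whether the shorter candidate is the one whose traversed middle arc exceeds $\pi$ --- is asserted rather than computed. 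In particular your claim that ``the same-side configuration automatically routes the traversed middle arc onto the major arc, so condition~2) needs no auxiliary angle constraint'' is precisely the sign bookkeeping you defer to the end, and it is not innocuous: the paper's own computation for the both-points-outside sub-case gives $c_2-c_1=-4\varphi<0$ and concludes that there the major-arc candidate is the \emph{longer} one. Whichever way that sub-case ultimately resolves, it cannot be dispatched by the word ``automatically''; the arc-angle comparison must actually be carried out for the inside and outside sub-cases separately, as the paper does.
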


\begin{proof}
 Let $c_1$ and $c_2$ denote the lengths of the curves in Fig. \ref{fig:PossibleCCCPath}, respectively.
 
 Consider the case where the initial and final points are on the same side first. If they are both inside, as shown in Fig. \ref{fig:InIn}, the lengths of two paths are
 \begin{equation}
	\begin{aligned}
     c_1 &= \varepsilon_1 + (\alpha + \pi) + \varepsilon_2, \\
     c_2 &= [2\pi + (\alpha - \varepsilon_1)] + (\pi - \alpha) + (2\pi - \alpha + \varepsilon_2), \\
   \end{aligned}
 \end{equation}
 and the difference is
 \begin{equation}
     c_2 - c_1 = 4(\pi - \alpha) > 0.
 \end{equation}
 If they are both outside, the difference of the lengths is
 \begin{equation}
	\begin{aligned}
     c_2 - c_1 = -4\varphi < 0.
   \end{aligned}
 \end{equation}

 Therefore, if the $P_i$ and $P_f$ are both inside, the path with the middle arc as an minor arc cannot be the shortest curve, as the alternative is shorter. Conversely, if both points are outside, the path with the middle arc as a major arc cannot be the shortest curve.
 
 When the points are on the opposite sides, the difference of the path length is
 \begin{align}
   c_2 - c_1 = 4(\frac{\pi}{2} - \varphi).
 \end{align}
 
 Therefore, if $\varphi > \pi/2$, the shortest path cannot be the one with the middle arc as a major arc, and if $\varphi < \pi/2$, the shortest path cannot be the one with the middle arc as a minor arc.
 
 The \textit{CCC} typed path with the middle arc as a major arc cannot be an optimal solution \cite{dubins1957curves}. Therefore, \textit{CCC} cannot be the shortest path when the initial and final points are on the opposite side with $\varphi < \pi/2$ or both points are outside.
\end{proof}

We now proceed to define the equivalency groups for each class and present approaches for extracting the shortest path within each equivalency group. The following additional notation is used: forms like $AB$ and $\wideparen{AB}$ denote straight line segments and circular arc segments, respectively, with $A$ and $B$ being the segments' endpoints.

\subsection{Equivalency Group $\mathbb{E}_1$}
As mentioned previously, classes $a_{11}$ and $a_{44}$ belong to the same equivalency group. We demonstrate how to extract the shortest path for class $a_{11}$. By applying an orthogonal transformation to class $a_{11}$, the shortest path for class $a_{44}$ can be obtained.

\begin{theorem} \label{theo:a_11}
  For the short path case, the shortest path corresponding to the class $a_{11}$ may be \textit{LSL, RSR, RSL, LSR, RLR, LRL}, as shown in Table \ref{tab:a_11OptimalPath}.
  \begin{table}[htbp]
    \caption{Shortest Path Corresponding to Class $a_{11}$}
    \label{tab:a_11OptimalPath}
    \renewcommand{\arraystretch}{1.3}
    \begin{center}
      \begin{tabular}{c|c|c|c} 
        \Xhline{1.2pt}
		\multicolumn{3}{c|}{Condition} & Shortest Path \\ 
		\hline
		\multicolumn{3}{c|}{$C_{ir} \cap C_{fl} = \emptyset$} & RSL \\
		\hline
		\multirow{6}*{$C_{ir} \cap C_{fl} \ne \emptyset$} & \multirow{3}*{$\alpha < \beta$} & $t_{rsr} < \pi, S_{11}^1 > 0$ & RSR \\
		~							 &~								& $t_{rsr} > \pi, S_{11}^2 > 0$ & LSR \\
		~							 &~								&     Otherwise					  & RLR \\
		\cline{2-4}
		~							 &\multirow{3}*{$\alpha > \beta$} & $q_{lsl} < \pi, S_{11}^3 > 0$ & LSL \\
		~							 &~								& $q_{lsl} > \pi, S_{11}^4 > 0$ & LSR \\
		~							 &~								&     Otherwise					  & LRL \\ 
		\Xhline{1.2pt}
	  \end{tabular}
    \end{center}
  \end{table}

  The switching functions in Table \ref{tab:a_11OptimalPath} are
  \begin{equation}
    \begin{aligned}
      S_{11}^1 &= 2(p_{rlr} - \pi) - p_{rsr}, \\
	  S_{11}^2 &= 2(t_{rlr} + q_{rlr}) - (p_{lsr} + 2q_{lsr}) + 2\pi, \\
	  S_{11}^3 &= 2(p_{lrl} - \pi) - p_{lsl}, \\
	  S_{11}^4 &= 2(t_{lrl} + q_{lrl}) - (p_{lsr} + 2t_{lsr}) + 2\pi.
    \end{aligned}
  \end{equation}
\end{theorem}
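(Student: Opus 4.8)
The plan is to combine a candidate-elimination argument with the internal symmetry of class $a_{11}$, so that only half of the decision tree must be derived from scratch. Since $a_{11}$ is the region $\alpha,\beta\in[0,\pi/2]$, the swap $(\alpha,\beta)\mapsto(\beta,\alpha)$ keeps the configuration inside $a_{11}$ while interchanging the two regions $\alpha<\beta$ and $\alpha>\beta$. Under the equivalency theorem of Section \ref{subsec:Equivalency Group} this swap sends a path $C_iC_f$ to $\bar{C_f}\bar{C_i}$ with the initial and final segment lengths exchanged, hence it maps $LSL\leftrightarrow RSR$ and $RLR\leftrightarrow LRL$ and fixes both $LSR$ and $RSL$. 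Consequently it carries the three $\alpha<\beta$ rows (RSR, LSR, RLR) exactly onto the three $\alpha>\beta$ rows (LSL, LSR, LRL), and sends $S_{11}^1,S_{11}^2$ to $S_{11}^3,S_{11}^4$. It therefore suffices to establish the top (RSL) row and the $\alpha<\beta$ block, and to read off the $\alpha>\beta$ block by symmetry. Before the case analysis I would record a feasibility dictionary: each CSC length is real iff its $p$-radicand is nonnegative, and a direct center-distance computation shows that $p_{rsl}^2>0$ holds exactly when $C_{ir}$ and $C_{fl}$ are disjoint, so the top-row condition $C_{ir}\cap C_{fl}=\emptyset$ is precisely RSL-feasibility.

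Next I would prove the top row: whenever RSL is feasible in $a_{11}$, it is the shortest path. Both orientations point into the first quadrant, so the RSL turning amounts $t_{rsl},q_{rsl}$ are small and the straight internal-tangent segment is short; I would bound $\mathcal{L}_{rsl}$ below the CSC competitors using the radicand identities (for instance $p_{lsl}^2-p_{rsr}^2=4d(\sin\alpha-\sin\beta)$ and the analogous relations tying the $rsl$/$lsr$ radicands to the $rsr$/$lsl$ ones), and I would rule out the CCC competitors with Proposition \ref{prop:Necessary Condition of CCC Being Optimal Path}: disjointness of $C_{ir}$ and $C_{fl}$ places the configuration in the regime where that proposition forbids a \textit{CCC} minimizer. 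This isolates RSL as dominant whenever its tangent exists.

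In the complementary regime $C_{ir}\cap C_{fl}\neq\emptyset$ I would restrict to $\alpha<\beta$ and show the minimizer lies in $\{RSR, LSR, RLR\}$. Excluding $LSL$ here is a genuine (if routine) length comparison: writing $\mathcal{L}_{lsl}-\mathcal{L}_{rsr}=2(\beta-\alpha)+(p_{lsl}-p_{rsr})$, using the radicand identity together with $\sin\beta-\sin\alpha\le\beta-\alpha$ and the bound $p_{rsr}\ge d$ on the straight segment yields $\mathcal{L}_{lsl}\ge\mathcal{L}_{rsr}$, and $LRL$ is excluded as the swap-image of $RLR$ whose optimality region sits in $\alpha>\beta$. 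The remaining work is two pairwise comparisons. Fixing the middle arc of the \textit{CCC} path to the branch dictated by Proposition \ref{prop:Necessary Condition of CCC Being Optimal Path} and subtracting total lengths produces, after the modular reduction, $S_{11}^1=2(p_{rlr}-\pi)-p_{rsr}$ deciding RSR against RLR, and carrying the $\{\text{mod}~2\pi\}$ corrections in $t_{lsr},q_{lsr},t_{rlr},q_{rlr}$ gives $S_{11}^2=2(t_{rlr}+q_{rlr})-(p_{lsr}+2q_{lsr})+2\pi$ deciding LSR against RLR. The test $t_{rsr}\lessgtr\pi$ is the analytic proxy for whether $P_i$ and $P_f$ lie on the same or opposite side of the rhombus (equivalently $\varphi\lessgtr\pi/2$), which by Proposition \ref{prop:Necessary Condition of CCC Being Optimal Path} is exactly the dichotomy governing whether a CSC path can undercut RLR at all; this is why the switching tests are gated by $t_{rsr}<\pi$ and $t_{rsr}>\pi$, with RLR as the ``Otherwise'' default. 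The symmetry swap then delivers $S_{11}^3,S_{11}^4$ and the $\alpha>\beta$ rows verbatim.

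Finally I would assemble these comparisons into Table \ref{tab:a_11OptimalPath}, verifying the cases are exhaustive and mutually exclusive on $\{\alpha<\beta\}\cup\{\alpha>\beta\}$ and that the boundary $\alpha=\beta$ is covered by continuity. The main obstacle I anticipate is the modular bookkeeping: each arc length is defined only mod $2\pi$, so the total-length formulas are piecewise and the naive differences are off by multiples of $2\pi$ unless one first pins down which branch of each $\arctan$ and each arc applies in the current subregion. Making the switching functions come out exactly as $S_{11}^1,\dots,S_{11}^4$ — including the stray $+2\pi$ and the $-2\pi$ inside $2(p_{rlr}-\pi)$ — hinges on correctly matching the analytic branch conditions ($t_{rsr}\lessgtr\pi$, $q_{lsl}\lessgtr\pi$) to the same/opposite-side geometry of Proposition \ref{prop:Necessary Condition of CCC Being Optimal Path}; establishing that correspondence, rather than any single algebraic step, is the crux of the argument.
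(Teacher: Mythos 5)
Your overall skeleton matches the paper's: isolate the $C_{ir}\cap C_{fl}=\emptyset$ row where \textit{RSL} dominates, split the intersecting case by the sign of $\alpha-\beta$, decide \textit{RSR} versus \textit{LSR} by $t_{rsr}\lessgtr\pi$, and settle the survivor against the \textit{CCC} candidate with $S_{11}^1$ or $S_{11}^2$, obtaining the $\alpha>\beta$ block by symmetry. Two steps, however, do not go through as written. First, your exclusion of \textit{LRL} in the region $\alpha<\beta$ ``as the swap-image of \textit{RLR}'' is circular: the equivalency theorem gives $\mathcal{L}_{lrl}[\alpha,\beta]=\mathcal{L}_{rlr}[\beta,\alpha]$, so the claim ``$\mathcal{L}_{rlr}<\mathcal{L}_{lrl}$ for $\alpha<\beta$'' is, under the swap, literally the same statement as ``$\mathcal{L}_{lrl}<\mathcal{L}_{rlr}$ for $\alpha>\beta$''; the symmetry transports the statement between the two half-regions but proves it in neither. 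One genuine monotonicity input is required, and this is exactly what the paper supplies with the signs of $\partial\mathcal{L}/\partial\alpha$ and $\partial\mathcal{L}/\partial\beta$ combined with the equality $\mathcal{L}_{rlr}=\mathcal{L}_{lrl}$ on the diagonal $\alpha=\beta$. Your direct comparison of $\mathcal{L}_{lsl}$ with $\mathcal{L}_{rsr}$ has a related weakness: the identity $\mathcal{L}_{lsl}-\mathcal{L}_{rsr}=2(\beta-\alpha)+(p_{lsl}-p_{rsr})$ only holds after the $\{\text{mod}~2\pi\}$ branches are fixed (the two sides can differ by $2\pi$ for configurations in $a_{11}$), and the chain $p_{rsr}\ge d$ does not by itself give $p_{lsl}+p_{rsr}\ge 2d$, which is what your bound actually needs.

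Second, you have misidentified the role of the gate $t_{rsr}\lessgtr\pi$. In the paper it is \emph{not} a proxy for the same-side/opposite-side dichotomy of Proposition \ref{prop:Necessary Condition of CCC Being Optimal Path}; it is the switch between \textit{RSR} and \textit{LSR}, arising from the critical initial orientation $\overline{\alpha}$ at which both degenerate to the common path \textit{SR} tangent to $O_{fr}$: $t_{rsr}<\pi$ iff $\alpha>\overline{\alpha}$ iff $\mathcal{L}_{rsr}<\mathcal{L}_{lsr}$. Under your reading, one of the two branches would already exclude \textit{RLR} and need no further test, yet the table compares the surviving \textit{CSC} candidate against \textit{RLR} in \emph{both} branches (via $S_{11}^1$ when $t_{rsr}<\pi$ and via $S_{11}^2$ when $t_{rsr}>\pi$) and defaults to \textit{RLR} in each. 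Without the degenerate-path/critical-orientation argument you cannot justify why the first test in each sub-branch is $t_{rsr}$ (respectively $q_{lsl}$) rather than some condition on the rhombus, so this gap would propagate into an incorrectly structured decision table. Your treatment of the top row, by contrast, is sound and in fact more complete than the paper's, since you explicitly rule out the \textit{CCC} candidates via Proposition \ref{prop:Necessary Condition of CCC Being Optimal Path} where the paper relies only on the cited inequality $\mathcal{L}_{rsl}<\min\{\mathcal{L}_{lsl},\mathcal{L}_{rsr},\mathcal{L}_{lsr}\}$ against the \textit{CSC} competitors.
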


\begin{proof}
 Since $\mathcal{L}_{rsl} < \min \{\mathcal{L}_{lsl}, \mathcal{L}_{rsr}, \mathcal{L}_{lsr}\}$ \cite{shkel2001classification}, the shortest path is \textit{RSL}, if \textit{RSL} typed path is feasible, i.e.,
 \begin{align}
   d_{rl} < d < d_{0}.
 \end{align}
 
 Due to symmetry, if $\alpha$ equals to $\beta$, then $\mathcal{L}_{lsl} = \mathcal{L}_{rsr}$ and $\mathcal{L}_{lrl} = \mathcal{L}_{rlr}$ hold. Furthermore, based on the aforementioned length of each path, we obtain $\frac{\partial \mathcal{L}_{lsl}}{\partial \alpha} < 0$, $\frac{\partial \mathcal{L}_{lsl}}{\partial \beta} > 0$ and $\frac{\partial \mathcal{L}_{rsr}}{\partial \alpha} > 0$, $\frac{\partial \mathcal{L}_{rsr}}{\partial \beta} < 0$. Therefore, if $\alpha < \beta$, then $\mathcal{L}_{rsr} < \mathcal{L}_{lsl}$ and $\mathcal{L}_{rlr} < \mathcal{L}_{lrl}$; whereas if $\alpha > \beta$, then $\mathcal{L}_{rsr} > \mathcal{L}_{lsl}$ and $\mathcal{L}_{rlr} > \mathcal{L}_{lrl}$. 
 
 Consider the relationship between \textit{RLR}, \textit{RSR}, and \textit{LSR} if $\alpha$ is less than $\beta$ first.
 
 We first derive the switching functions which determine the path \textit{RSR} and \textit{LSR}. Define the critical initial orientation as the orientation $\alpha$ that coincides with the tangent to the circle $O_{fr}$, as shown in Fig. \ref{fig:CriticalInitialOrientation}, and denote it as $\alpha = \overline{\alpha}$. The critical orientation $\overline{\alpha}$ is uniquely defined by the set $(\beta, d)$. The degenerated path $P_i\wideparen{TP_f}$ corresponds to \textit{SR}, which implies that if $\alpha > \overline{\alpha}$, then the path \textit{LSR} can be excluded from consideration ($\mathcal{L}_{lsr} > \mathcal{L}_{rsr}$); if $\alpha < \overline{\alpha}$, the path \textit{RSR} can be excluded. Specifically, if $\alpha > \overline{\alpha}$ then $t_{lsr} > \pi$ and $t_{rsr} < \pi$; if $\alpha < \overline{\alpha}$, then $t_{lsr} < \pi$ and $t_{rsr} > \pi$. Thus, $t_{rsr}$ or $t_{lsr}$ can serve as the switching function. If $t_{rsr} > \pi$, then $\mathcal{L}_{rsr} > \mathcal{L}_{lsr}$; if $t_{rsr} < \pi$, then $\mathcal{L}_{rsr} < \mathcal{L}_{lsr}$.

  \begin{figure}[htbp]
     \centering
     \subfigure[Critical initial orientation]{
         \includegraphics[width=3cm]{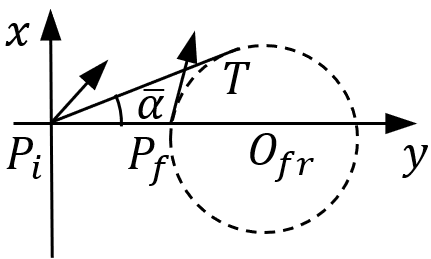}
         \label{fig:CriticalInitialOrientation}}
     \subfigure[Critical final orientation]{
         \includegraphics[width=3.1cm]{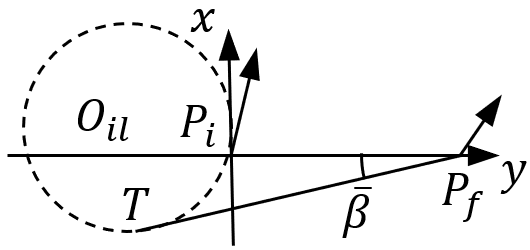}
         \label{fig:CriticalFinalOrientation}}
     \caption{Critical orientation}
 \end{figure}

 The derivation of switching functions which determine the path \textit{RLR} and \textit{RSR} is more complex. As shwon in Fig. \ref{fig:a11_RLR_RSR}, the path $\wideparen{P_iA}\wideparen{BP_f}$ is the \textit{RSR} path and $\wideparen{P_iDEP_f}$ is the \textit{RLR} path. The circle $O_{tl}$ is the common tangent circle of $O_{ir}$ and $O_{fr}$, with $AO_{ir} // CO_{tl} // BO_{fr}$ and $FP_f // GH$. So $\mathcal{L}_{rlr}$, $\mathcal{L}_{rsr}$ and the switching function $S_{11}^1$ respectively are
 \begin{equation}
		\begin{aligned}
			\mathcal{L}_{rlr} &= (\xi + \gamma) + [(\gamma - \zeta) + \pi + \sigma] + \sigma, \\
			\mathcal{L}_{rsr} &= \xi + p_{rsr} + \zeta + \pi, \\
			S_{11}^1 &= \mathcal{L}_{rlr} - \mathcal{L}_{rsr} = 2(p_{rlr} - \pi) - p_{rsr}.
		\end{aligned}
	\end{equation}

 If $S_{11}^1 > 0$ then $\mathcal{L}_{rlr} > \mathcal{L}_{rsr}$; conversely, $\mathcal{L}_{rlr} < \mathcal{L}_{rsr}$.

  \begin{figure}[htbp]
     \centering
     \subfigure[An illustration the switching function $S_{11}^1$ for class $a_{11}$]{
         \includegraphics[width=3cm]{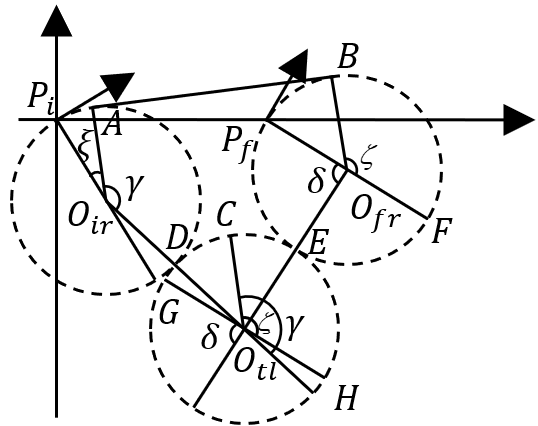}
         \label{fig:a11_RLR_RSR}}
     \subfigure[An illustration of switching function $S_{11}^2$ for class $a_{11}$]{
         \includegraphics[width=3cm]{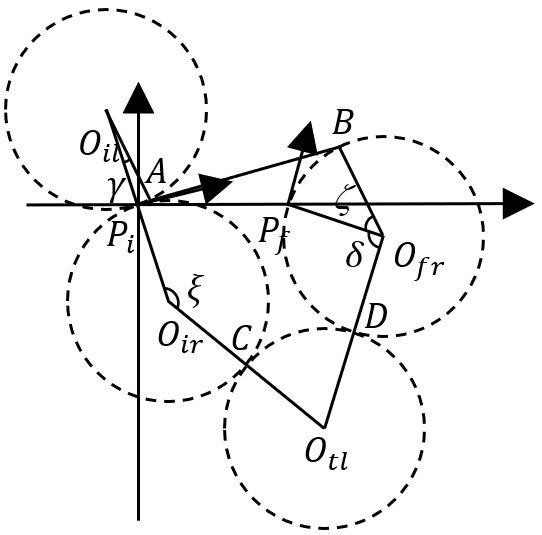}
         \label{fig:a11_RLR_LSR}}
     \caption{Illustrations of switching functions for $a_{11}$}
 \end{figure}

 The derivation of switching functions which determine \textit{RLR} and \textit{LSR} are similar with that of $S_{11}^1$. As illustrated in Fig. \ref{fig:a11_RLR_LSR}, the path lengths $\mathcal{L}_{rlr}$, $\mathcal{L}_{lsr}$, along with the corresponding switching function $S_{11}^2$ are given by:
 \begin{equation}
	\begin{aligned}
	  \mathcal{L}_{rlr} &= \xi + \xi + \gamma + \zeta + \delta + \delta, \\
	  \mathcal{L}_{lsr} &= \gamma + p_{lsr} + 2\pi - \zeta, \\
	  S_{11}^2 &= \mathcal{L}_{rlr} - \mathcal{L}_{lsr} \\
                &= 2(t_{rlr} + q_{rlr}) - (p_{lsr} + 2q_{lsr}) + 2\pi.
	\end{aligned}
 \end{equation}
 
 If $S_{11}^2 > 0$, then $\mathcal{L}_{rlr} > \mathcal{L}_{lsr}$; if $S_{11}^2 < 0$ $\mathcal{L}_{rlr} < \mathcal{L}_{lsr}$.

 Similarly, if $\alpha$ is greater than $\beta$, the switching functions to determine \textit{LRL, LSL} and \textit{LSR} are: $S_{11}^3 = 2(p_{lrl} - \pi) - p_{lsl}$, $S_{11}^4 = 2(t_{lrl} + q_{lrl}) - (p_{lsr} + 2t_{lsr}) + 2\pi$.
 
 It is noteworthy that the switching function for distinguishing between \textit{LSL} and \textit{LSR} is $q_{lsl}$, as they share a common degenerated path \textit{LS}. Define the critical final orientation as one where the orientation $\beta$ aligns with the tangent to the circle $O_{il}$ (see Fig. \ref{fig:CriticalFinalOrientation}). This critical orientation is denoted by $\beta = \overline{\beta}$ and is uniquely defined by the set $(\alpha, d)$. Consequently, \textit{LSR} can be excluded from consideration if $\beta > \overline{\beta}$ ($\mathcal{L}_{lsr} > \mathcal{L}_{lsl}$); conversely, if $\beta < \overline{\beta}$, then the path \textit{LSL} can be excluded. Thus, $t_{rsr}$ serves as the switching function when $\alpha<\beta$, whereas $q_{lsl}$ is applied when $\alpha>\beta$.
\end{proof}

\subsection{Equivalency Group $\mathbb{E}_2$}
Classes $a_{12}$, $a_{21}$, $a_{34}$, and $a_{43}$ belong to the same equivalency group. We demonstrate how to extract the shortest path for class $a_{12}$ as an example.

\begin{theorem} \label{theo:a_12}
  For the short path case, the shortest path corresponding to the class $a_{12}$ may be \textit{RSR, LSR, RLR, LRL}, as shown in Table \ref{tab:a_12OptimalPath}.
  
  \begin{table}[htbp]
    \caption{Shortest Path Corresponding to Class $a_{12}$}
    \label{tab:a_12OptimalPath}
    \renewcommand{\arraystretch}{1.3}
    \begin{center}
      \begin{tabular}{c|c|c|c}
		\Xhline{1.2pt}
		\multicolumn{3}{c|}{Condition}    &  Shortest Path\\
		\hline
		\multirow{4}*{$C_{il} \cap C_{fr} = \emptyset$} 	& \multirow{2}*{$t_{rsr} < \pi$} 	 & $S_{12}^1 < 0$ & RLR \\
		~									& ~ 								 & $S_{12}^1 > 0$ & RSR \\
		\cline{2-4}
		~									& \multirow{2}*{$t_{rsr} > \pi$} 	 & $S_{12}^2 < 0$ & RLR \\
		~									& ~ 								 & $S_{12}^2 > 0$ & LSR \\
		\hline
		\multicolumn{3}{c|}{$C_{il} \cap C_{fr} \ne \emptyset$}													 & LRL \\
		\Xhline{1.2pt}
	  \end{tabular}
    \end{center}
  \end{table}
  the switching functions in Table \ref{tab:a_12OptimalPath} are
  \begin{equation}
	\begin{aligned}
	  S_{12}^1 &= 2(p_{rlr} - \pi) - p_{rsr}, \\
	  S_{12}^2 &= 2(t_{rlr} + q_{rlr}) - (p_{lsr} + 2q_{lsr}) + 2\pi.
	\end{aligned}
  \end{equation}
  
\end{theorem}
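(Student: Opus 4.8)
The plan is to mirror the argument of Theorem~\ref{theo:a_11}, exploiting the fact that for class $a_{12}$ the inequality $\alpha < \beta$ holds automatically (since $\alpha \le \pi/2 \le \beta$). This places us, from the outset, in the analogue of the ``$\alpha<\beta$'' branch of Theorem~\ref{theo:a_11}, which is why $S_{12}^1$ and $S_{12}^2$ coincide verbatim with $S_{11}^1$ and $S_{11}^2$. The first task is candidate reduction. Using the monotonicity relations $\partial\mathcal{L}_{lsl}/\partial\alpha<0$, $\partial\mathcal{L}_{rsr}/\partial\alpha>0$ together with the symmetry at $\alpha=\beta$, exactly as in Theorem~\ref{theo:a_11}, I would conclude $\mathcal{L}_{rsr}<\mathcal{L}_{lsl}$, eliminating \textit{LSL}. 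An entirely parallel symmetry-plus-monotonicity comparison between \textit{RSL} and \textit{LSR} (these two types are interchanged under the Shkel conjugation $(\alpha,\beta)\mapsto(-\beta,-\alpha)$) gives $\mathcal{L}_{lsr}<\mathcal{L}_{rsl}$ for $\alpha<\beta$, eliminating \textit{RSL}. This leaves precisely $\{\textit{RSR},\textit{LSR},\textit{RLR},\textit{LRL}\}$ as asserted.

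The second and central task is to justify the top-level dichotomy in Table~\ref{tab:a_12OptimalPath}: \textit{LRL} is the shortest exactly when $C_{il}\cap C_{fr}\ne\emptyset$. This is the one genuine departure from the $\alpha<\beta$ branch of Theorem~\ref{theo:a_11}: although the relation $\mathcal{L}_{rlr}<\mathcal{L}_{lrl}$ (valid for $\alpha<\beta$ whenever both are feasible) nominally favors \textit{RLR}, the feasibility of the two \textit{CCC} families is governed by different pairs of circles, so \textit{LRL} can still be the only viable \textit{CCC} path in part of the domain. I would therefore argue that $C_{il}\cap C_{fr}\ne\emptyset$ is exactly the regime in which \textit{LRL} is realizable and satisfies the necessary condition of Proposition~\ref{prop:Necessary Condition of CCC Being Optimal Path}, while \textit{RLR} is either infeasible or fails that condition, so that \textit{LRL} dominates every \textit{CSC} competitor there. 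Concretely, I would place the centers $O_{il}=(-\sin\alpha,\cos\alpha)$ and $O_{fr}=(d+\sin\beta,-\cos\beta)$ — both of which lie above the $x$-axis in class $a_{12}$ — and translate the overlap inequality $|O_{il}O_{fr}|\le 2$ into the same-side / $\varphi<\pi/2$ rhombus bookkeeping of Proposition~\ref{prop:Necessary Condition of CCC Being Optimal Path}. When the overlap is empty, the identical computation shows \textit{LRL} drops out, leaving only \textit{RSR}, \textit{LSR}, \textit{RLR}.

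For the remaining case $C_{il}\cap C_{fr}=\emptyset$, I would import the switching-function analysis of Theorem~\ref{theo:a_11} essentially unchanged. The critical-initial-orientation argument applies: \textit{RSR} and \textit{LSR} share the degenerate path \textit{SR}, and the sign of $t_{rsr}-\pi$ decides which of the two is the viable \textit{CSC} candidate ($t_{rsr}<\pi\Rightarrow\mathcal{L}_{rsr}<\mathcal{L}_{lsr}$, and conversely). I would then compare the surviving \textit{CSC} path against \textit{RLR} through the geometric length identities underlying $S_{12}^1$ (the \textit{RSR}-versus-\textit{RLR} comparison) and $S_{12}^2$ (the \textit{LSR}-versus-\textit{RLR} comparison), with the sign of each switching function selecting the shorter path. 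This reproduces every row of Table~\ref{tab:a_12OptimalPath} in the $C_{il}\cap C_{fr}=\emptyset$ block.

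The step I expect to be the main obstacle is the second one: rigorously certifying that the purely geometric overlap condition $C_{il}\cap C_{fr}\ne\emptyset$ is \emph{equivalent} to \textit{LRL} satisfying the necessary optimality condition of Proposition~\ref{prop:Necessary Condition of CCC Being Optimal Path} and, simultaneously, to \textit{RLR} being unavailable or dominated. Reconciling the nominal $\mathcal{L}_{rlr}<\mathcal{L}_{lrl}$ ordering with the appearance of \textit{LRL} in the table hinges entirely on this feasibility analysis, and matching the inside/outside and $\varphi<\pi/2$ rhombus conditions to the single inequality $|O_{il}O_{fr}|\le 2$ is the delicate part. By contrast, the candidate reduction and the switching-function comparisons are routine adaptations of Theorem~\ref{theo:a_11}.
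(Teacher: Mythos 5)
Your overall architecture (reduce to four candidates, reuse the $a_{11}$ switching functions when $C_{il}\cap C_{fr}=\emptyset$, treat the overlap case separately) matches the paper, and your handling of the $C_{il}\cap C_{fr}=\emptyset$ block is essentially the paper's argument. But two of your steps have genuine problems. First, your elimination of \textit{RSL} rests on a symmetry that does not exist: under the Shkel equivalences, \textit{LSR}$[\alpha,\beta]$ is mapped to \textit{LSR}$[\beta,\alpha]$ by the swap $(\alpha,\beta)\mapsto(\beta,\alpha)$ and to \textit{RSL} only by $(\alpha,\beta)\mapsto(-\alpha,-\beta)$ or $(-\beta,-\alpha)$. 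Hence $\mathcal{L}_{lsr}$ and $\mathcal{L}_{rsl}$ agree on the locus $\alpha+\beta\equiv 0$, not on $\alpha=\beta$, so ``monotonicity away from $\alpha=\beta$'' cannot yield $\mathcal{L}_{lsr}<\mathcal{L}_{rsl}$. The paper disposes of \textit{RSL} by a cleaner route: for class $a_{12}$ one has $|O_{ir}O_{fl}|^2=(d-\sin\alpha-\sin\beta)^2+(\cos\alpha+\cos\beta)^2<4$ throughout the short-path regime, so $C_{ir}\cap C_{fl}\ne\emptyset$ and \textit{RSL} is simply infeasible. (This same fact is what lets the paper discard \textit{LSL}: its first arc then exceeds $\pi$, so \textit{LSL} is dominated by the degenerate \textit{SR} path and hence by \textit{LSR} or \textit{RSR}.)

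Second, the step you yourself flag as the ``main obstacle'' is exactly the step that is missing, and your intended route through Proposition~\ref{prop:Necessary Condition of CCC Being Optimal Path} starts from a premise the paper contradicts. You assume the $a_{11}$ ordering $\mathcal{L}_{rlr}<\mathcal{L}_{lrl}$ persists into $a_{12}$ and therefore try to disqualify \textit{RLR} on feasibility/necessary-condition grounds; the paper instead argues that when all four circles intersect, \textit{LSR} becomes infeasible, \textit{RSR} has both end arcs exceeding $\pi$ (so no \textit{CSC} path can win), and $\mathcal{L}_{rlr}>\mathcal{L}_{lrl}$ in this regime, so \textit{LRL} wins outright. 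The $a_{11}$ monotonicity relations were derived for $\alpha,\beta$ both in the first quadrant and do not transfer to $\beta\in[\pi/2,\pi]$, so you cannot import that ordering. You also never supply an argument for why \textit{LRL} beats \textit{RSR} in the overlap case (your phrase ``dominates every CSC competitor'' is asserted, not derived), which is precisely the both-end-arcs-exceed-$\pi$ observation the paper uses. As written, the proposal does not close the $C_{il}\cap C_{fr}\ne\emptyset$ row of Table~\ref{tab:a_12OptimalPath}.
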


\begin{proof}
  Since $C_{ir} \cap C_{fl} \ne \emptyset$ occurs in the short path case, the path \textit{RSL} is infeasible. As shown in Fig. \ref{fig:a_12_LSLImpossible}, the length of the first segment of \textit{LSL} exceeds $\pi$, leading to $\mathcal{L}_{lsl} > |P_i\wideparen{TP_f}|$. This implies that if $\alpha<\overline{\alpha}$, then $\mathcal{L}_{lsr} < \mathcal{L}_{lsl}$; conversely if $\alpha>\overline{\alpha}$ then $\mathcal{L}_{lsr} > \mathcal{L}_{lsl}$. Therefore, the \textit{LSL} path can also be excluded.
 
 \begin{figure}[htbp]
		\centering
		\includegraphics[width=3cm]{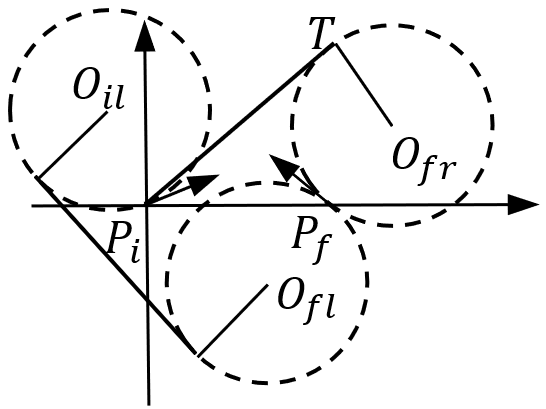}
		\caption{Case where \textit{LSL} cannot be the shortest path for class $a_{12}$}
		\label{fig:a_12_LSLImpossible}
	\end{figure}

 The candidate solutions are \textit{RLR}, \textit{RSR}, and \textit{LSR} if $C_{il}$ and $C_{fr}$ do not intersect. Similar to class $a_{11}$, $t_{rsr}$ (or $t_{lsr}$) serves as the switching function which determines \textit{RSR} and \textit{LSR} with $\overline{\alpha}$ as the critical initial orientation. Subsequently, their respective path lengths are compared with that of \textit{RLR}.
 
 If all the four circles intersect, the path \textit{LSR} becomes infeasible. The lengths of initial and final segments of \textit{RSR} both exceed $\pi$, making \textit{RSR} significantly longer than other candidates. Consequently, the \textit{CSC} typed path cannot be the shortest path. Furthermore, since $\mathcal{L}_{rlr}$ is greater than $\mathcal{L}_{lrl}$, the shortest path is \textit{LRL}.
\end{proof}

\subsection{Equivalency Group $\mathbb{E}_3$}
Classes $a_{13}$, $a_{31}$, $a_{24}$ and $a_{42}$ belong to the same equivalency group. We demonstrate how to extract the shortest path for class $a_{13}$ as an example.

\begin{theorem} \label{theo:a_13}
  For the short path case, the shortest path corresponding to the class $a_{13}$ may be \textit{RSR, RSL, LSR, RLR, LRL}, as shown in Table \ref{tab:a_13OptimalPath}.
  
  \begin{table}[htbp]
    \caption{Shortest Path Corresponding to Class $a_{13}$}
    \label{tab:a_13OptimalPath}
    \renewcommand{\arraystretch}{1.3}
    \begin{center}
      \begin{tabular}{c|c|c|c}
      	\Xhline{1.2pt}
	  	\multicolumn{3}{c|}{Condition}		    & Shortest Path \\
	  	\hline
	  	\multirow{4}*{$C_{il} \cap C_{fr} \ne \emptyset$} 	& \multirow{2}*{$C_{ir} \cap C_{fl} \ne \emptyset$} 			& $S_{13}^1 < 0$ & RLR \\
	  	~								& ~ 									& $S_{13}^1 > 0$ & LRL \\
	  	\cline{2-4}
	  	~								& \multirow{2}*{$C_{ir} \cap C_{fl} = \emptyset$} 			& $S_{13}^2 < 0$ & LRL \\
	  	~								& ~ 							 		& $S_{13}^2 > 0$ & RSL \\
	  	\hline
	  	\multirow{2}*{$C_{il} \cap C_{fr} = \emptyset$}		& \multicolumn{2}{c|}{$t_{rsr} < \pi$}					& RSR \\
	  	~								& \multicolumn{2}{c|}{$t_{rsr} > \pi$}					& LSR \\
		\Xhline{1.2pt}
	  \end{tabular}
    \end{center}
  \end{table}
  the switching functions in Table \ref{tab:a_13OptimalPath} are
  \begin{equation}
	\begin{aligned}
	  S_{13}^1 &= \alpha - \beta + p_{rlr} - p_{lrl}, \\
	  S_{13}^2 &= 2(q_{lrl} + t_{lrl}) + 2q_{rsl} - p_{rsl} - 2\pi.
	\end{aligned}
  \end{equation}  
\end{theorem}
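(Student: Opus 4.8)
The plan is to mirror the strategy used for $\mathbb{E}_1$ and $\mathbb{E}_2$: first prune the infeasible or provably suboptimal path types using the quadrant structure of $a_{13}$, then split into cases according to the circle intersections $C_{il}\cap C_{fr}$ and $C_{ir}\cap C_{fl}$, and finally attach one switching function to each surviving pairwise comparison. A useful preliminary observation is that for $a_{13}$ one always has $\alpha<\beta$, since $\alpha\in[0,\pi/2]$ and $\beta\in[\pi,3\pi/2]$; in particular $\alpha=\beta$ is impossible, so the symmetry argument that anchored the \textit{CCC} comparison at $\alpha=\beta$ in Theorem \ref{theo:a_11} is unavailable here, and the \textit{RLR}-versus-\textit{LRL} contest must be decided by an explicit length difference. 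I would first dispose of \textit{LSL}, which is absent from Table \ref{tab:a_13OptimalPath}: whenever \textit{RSL} is feasible the inequality $\mathcal{L}_{rsl}<\min\{\mathcal{L}_{lsl},\mathcal{L}_{rsr},\mathcal{L}_{lsr}\}$ of \cite{shkel2001classification} already gives $\mathcal{L}_{rsl}<\mathcal{L}_{lsl}$, and when \textit{RSL} is infeasible (the branch $C_{ir}\cap C_{fl}\ne\emptyset$) an arc-length argument like that behind Fig. \ref{fig:a_12_LSLImpossible} drives the initial \textit{LSL} arc past $\pi$; so \textit{LSL} is dominated in either situation and drops out.

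Next I would treat the case $C_{il}\cap C_{fr}=\emptyset$, in which \textit{LSR} is feasible and the relevant end circles are separated, so that a distance argument (together with Proposition \ref{prop:Necessary Condition of CCC Being Optimal Path}) rules out the \textit{CCC} types and leaves only \textit{RSR} and \textit{LSR}. Here I would reuse the critical-initial-orientation construction of Theorem \ref{theo:a_11}: \textit{RSR} and \textit{LSR} share the degenerate \textit{SR} path at the critical orientation $\overline{\alpha}$, so $t_{rsr}$ serves as the switching function, yielding \textit{RSR} for $t_{rsr}<\pi$ and \textit{LSR} for $t_{rsr}>\pi$, exactly as recorded in the bottom rows of the table.

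The substantive case is $C_{il}\cap C_{fr}\ne\emptyset$, where \textit{LSR} is infeasible; I would also argue that \textit{RSR} is dominated here (its end arcs are forced large, by the same kind of bound invoked for $a_{12}$), leaving the two \textit{CCC} paths and \textit{RSL} as the only contenders. If additionally $C_{ir}\cap C_{fl}\ne\emptyset$, then \textit{RSL} is infeasible and only \textit{RLR} and \textit{LRL} remain; substituting the two \textit{CCC} total-length formulas gives $\mathcal{L}_{rlr}-\mathcal{L}_{lrl}=2\,S_{13}^1$ with $S_{13}^1=\alpha-\beta+p_{rlr}-p_{lrl}$, so the sign of $S_{13}^1$ selects \textit{RLR} ($S_{13}^1<0$) or \textit{LRL} ($S_{13}^1>0$). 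If instead $C_{ir}\cap C_{fl}=\emptyset$, then \textit{RSL} is feasible and, by the Shkel inequality, is the unique viable \textit{CSC} contender; after showing in this sub-case that \textit{RLR} is no longer a contender (via a feasibility or definite-sign argument forced by the separation of the right end circles), I would compare \textit{LRL} against \textit{RSL} by decomposing both lengths along a common auxiliary tangent circle—the same device that produced $S_{11}^2$—obtaining the switching function $S_{13}^2=2(q_{lrl}+t_{lrl})+2q_{rsl}-p_{rsl}-2\pi$, whose sign chooses \textit{LRL} ($S_{13}^2<0$) or \textit{RSL} ($S_{13}^2>0$).

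The hard part will be the derivation of $S_{13}^2$, since it compares a three-arc path with an arc--line--arc path: one must identify the correct auxiliary tangent circle, match each arc sweep and the straight segment to the symbols $t_{lrl},q_{lrl},q_{rsl},p_{rsl}$, and—most delicately—track the $2\pi$ offsets so that the raw geometric length difference collapses to the stated closed form. A secondary obstacle is rigorously justifying the eliminations that are asserted rather than switched, namely that \textit{RSR} is non-optimal once $C_{il}\cap C_{fr}\ne\emptyset$ and that \textit{RLR} drops out when $C_{ir}\cap C_{fl}=\emptyset$; each of these needs a short monotonicity or arc-length argument rather than a sign test, and Proposition \ref{prop:Necessary Condition of CCC Being Optimal Path} should be invoked throughout to discard the major-arc variants of the \textit{CCC} paths.
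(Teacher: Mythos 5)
Your case decomposition, the two switching functions, and the critical-orientation argument deciding \textit{RSR} versus \textit{LSR} via $t_{rsr}$ all coincide with the paper's proof, but two of your eliminations rest on arguments that do not go through as stated. First, you eliminate \textit{LSL} by invoking the inequality $\mathcal{L}_{rsl}<\min\{\mathcal{L}_{lsl},\mathcal{L}_{rsr},\mathcal{L}_{lsr}\}$ ``whenever \textit{RSL} is feasible.'' That inequality is established in \cite{shkel2001classification} (and quoted in the proof of Theorem \ref{theo:a_11}) specifically for class $a_{11}$, where both orientation angles lie in the first quadrant; it does not transfer to $a_{13}$. Indeed, if it did hold here it would also give $\mathcal{L}_{rsl}<\mathcal{L}_{rsr}$ whenever \textit{RSL} is feasible, contradicting the rows of Table \ref{tab:a_13OptimalPath} in which \textit{RSR} is the shortest path. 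The paper's elimination of \textit{LSL} is different and self-contained: for $a_{13}$ the center distance governing \textit{LSL} exceeds the one governing \textit{RSR}, the two lengths coincide at the boundary configuration $\alpha=0$, $\beta=\pi$, and monotonicity then yields $\mathcal{L}_{lsl}>\mathcal{L}_{rsr}$ throughout the class.

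Second, in the sub-case $C_{il}\cap C_{fr}\ne\emptyset$, $C_{ir}\cap C_{fl}=\emptyset$ you defer the removal of \textit{RLR} to ``a feasibility or definite-sign argument forced by the separation of the right end circles.'' The circles in question are $C_{ir}$ and $C_{fl}$ (an initial-right and a final-left circle), \textit{RLR} remains feasible there, so a concrete domination argument is required rather than a feasibility one. The paper's key idea is a degeneracy argument: when $C_{ir}$ and $C_{fl}$ are tangent, the straight segment of \textit{RSL} and the final arc of \textit{RLR} both reduce to zero and the two paths coincide in the common curve \textit{RL}; hence once $C_{ir}\cap C_{fl}=\emptyset$ one has $\mathcal{L}_{rsl}<\mathcal{L}_{rlr}$, and an analogous degeneration disposes of \textit{RSR} in that branch. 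Without this (or an equivalent) argument, the entries of that branch of your table are not justified. The remainder of your outline --- $S_{13}^1=(\mathcal{L}_{rlr}-\mathcal{L}_{lrl})/2$ obtained from the closed-form \textit{CCC} lengths, and $S_{13}^2$ obtained by decomposing $\mathcal{L}_{lrl}$ and $\mathcal{L}_{rsl}$ along an auxiliary tangent circle as for $S_{11}^2$ --- matches the paper.
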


\begin{proof}
 For class $a_{13}$, the distance between $C_{il}$ and $C_{fl}$ is greater than that between $C_{ir}$ and $C_{fr}$. In this case, when $\alpha = 0$ and $\beta = \pi$, the path lengths satisfy $\mathcal{L}_{lsl} = \mathcal{L}_{rsr}$. Thus, $\mathcal{L}_{lsl}$ becomes greater than $\mathcal{L}_{rsr}$ if $\alpha > 0$ and $\beta > \pi$. Hence, \textit{LSL} can be excluded from the list of candidates for the shortest path.
 
 Similar to the long path case, if \textit{LSR} and \textit{RSR} are feasible (i.e., $C_{il} \cap C_{fr} = \emptyset$), the shortest path is one of these two. Either $t_{rsr}$ or $t_{lsr}$ can serve as the switching function.
 
 For the case where $C_{il}$ intersects with $C_{fr}$, if $C_{ir}$ and $C_{fl}$ are tangent, then \textit{RSL} and \textit{RLR} coincide, as the lengths of the second segment \textit{S} of \textit{RSL} and the final segment \textit{R} of \textit{RLR} both reduce to zero. Thus, the resulting path degenerated into their common curve \textit{RL}. Accordingly, if $C_{ir} \cap C_{fl} = \emptyset$, then $\mathcal{L}_{rsl}<\mathcal{L}_{rlr}$. Similarly, \textit{RSR} can be excluded from consideration. The remaining candidate paths are \textit{RSL} and \textit{LRL}. As shown in Fig. \ref{fig:a13_LRL_RSL}, their lengths and the switching function $S_{13}^2$ respectively are
 \begin{equation}
	\begin{aligned}
	  \mathcal{L}_{lrl} &= 2\gamma + \xi + \delta + 2\zeta, \\
	  \mathcal{L}_{rsl} &= \xi + p_{rsl} + 2\pi - \zeta, \\
	  S_{13}^2 &= \mathcal{L}_{lrl} - \mathcal{L}_{rsl} \\
			   &= 2(t_{lrl} + q_{lrl}) + 2q_{rsl} - p_{rsl} - 2\pi.
	\end{aligned}
 \end{equation}
 
 \begin{figure}
   \centering
   \includegraphics[width=3cm]{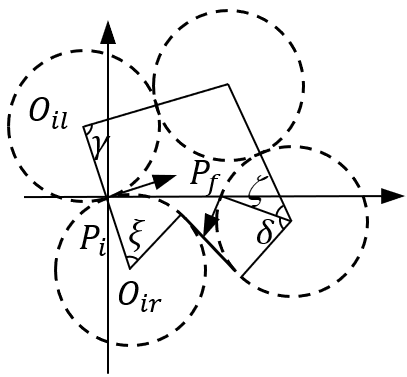}
   \caption{An illustration of switching function $S_{13}^2$ for class $a_{13}$}
   \label{fig:a13_LRL_RSL}
 \end{figure}
 
 Since \textit{RSL} is infeasible if $C_{ir} \cap C_{fl} \ne \emptyset$, we now only need to compare $\mathcal{L}_{lrl}$ and $\mathcal{L}_{rlr}$. The switching function is
 \begin{equation}
   \begin{aligned}
		S_{13}^1 &= \frac{\mathcal{L}_{rlr} - \mathcal{L}_{lrl}}{2} \\ 
				 &= \alpha - \beta + p_{rlr} - p_{lrl}.
	\end{aligned}
	\end{equation}
\end{proof}

\subsection{Equivalency Group $\mathbb{E}_4$}
Classes $a_{14}$, $a_{41}$ belong to the same equivalency group. We demonstrate how to extract the shortest path for class $a_{14}$ as an example.

\begin{theorem} \label{theo:a_14}
  For the short path case, the shortest path corresponding to the class $a_{14}$ may be \textit{RSR, RSL, LSR, RLR, LRL}, as shown in Table \ref{tab:a_14OptimalPath}.
  
  \begin{table}[htbp]
    \caption{Shortest Path Corresponding to Class $a_{14}$}
    \label{tab:a_14OptimalPath}
    \renewcommand{\arraystretch}{1.3}
    \begin{center}
      \begin{tabular}{c|c|c} 
    	\Xhline{1.2pt}
		\multicolumn{2}{c|}{Condition}			& Shortest Path \\
		\hline
		\multirow{3}*{\makecell[c]{$C_{il} \cap C_{fr} = \emptyset$, \\ $C_{ir} \cap C_{fl} = \emptyset$}}	& $t_{rsr} > \pi$ & LSR \\
		~										& $q_{rsr} > \pi$ & RSL \\
		~										& Otherwise		 & RSR \\
		\hline
		\multirow{2}*{\makecell[c]{$C_{il} \cap C_{fr} \ne \emptyset$, \\ $C_{ir} \cap C_{fl} = \emptyset$}}		& $S_{14}^1 < 0$  & LRL \\
		~										& $S_{14}^1 > 0$	 & LSR \\
		\hline
		\multirow{2}*{\makecell[c]{$C_{il} \cap C_{fr} = \emptyset$, \\ $C_{ir} \cap C_{fl} \ne \emptyset$}}		& $S_{14}^2 < 0$  & LRL \\
		~										& $S_{14}^2 > 0$	 & RSL \\
		\hline
		\multirow{2}*{\makecell[c]{$C_{il} \cap C_{fr} \ne \emptyset$, \\ $C_{ir} \cap C_{fl} \ne \emptyset$}}			& $S_{14}^3 < 0$  & RLR \\
		~										& $S_{14}^3 > 0$  & LRL \\
		\Xhline{1.2pt}
	  \end{tabular}
    \end{center}
  \end{table}
  the switching functions in Table \ref{tab:a_14OptimalPath} are
  \begin{equation}
	\begin{aligned}
	  S_{14}^1 &= 2(t_{lrl} + q_{lrl}) + 2t_{lsr} - p_{lsr} - 2\pi, \\
	  S_{14}^2 &= 2(t_{lrl} + q_{lrl}) + 2q_{rsl} - p_{rsl} - 2\pi, \\
	  S_{14}^3 &= \alpha - \beta + p_{rlr} - p_{lrl}.
	\end{aligned}
  \end{equation}
\end{theorem}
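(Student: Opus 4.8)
The plan is to cut the candidate list down to the five families in the table and then resolve the shortest one inside each feasibility regime, using a self-symmetry of $a_{14}$ to halve the work. The class is fixed by the equivalence $(\alpha,\beta)\mapsto(-\beta,-\alpha)$ of \cite{shkel2001classification}: from $\alpha\in[0,\pi/2]$ and $\beta\in[3\pi/2,2\pi]$ one gets $-\beta\in[0,\pi/2]$ and $-\alpha\in[3\pi/2,2\pi]$, so the map sends $a_{14}$ to itself while interchanging the initial and final arcs. It therefore swaps the conditions $C_{il}\cap C_{fr}=\emptyset$ and $C_{ir}\cap C_{fl}=\emptyset$, interchanges \textit{LSR} with \textit{RSL} (and $S_{14}^1$ with $S_{14}^2$), and fixes \textit{RSR}, \textit{RLR}, and \textit{LRL}. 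First I would discard \textit{LSL}: from the sign pattern $\partial\mathcal{L}_{lsl}/\partial\alpha<0$, $\partial\mathcal{L}_{lsl}/\partial\beta>0$ and the mirror relations $\partial\mathcal{L}_{rsr}/\partial\alpha>0$, $\partial\mathcal{L}_{rsr}/\partial\beta<0$, together with the equality $\mathcal{L}_{lsl}=\mathcal{L}_{rsr}$ at a symmetric configuration, I would show $\mathcal{L}_{lsl}>\mathcal{L}_{rsr}$ throughout $a_{14}$, exactly the exclusion step used in Theorem~\ref{theo:a_13}. The remaining five families I would analyze in the four regimes cut out by the two cross-intersection conditions.

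In the doubly-disjoint regime ($C_{il}\cap C_{fr}=\emptyset$ and $C_{ir}\cap C_{fl}=\emptyset$) all of \textit{RSR}, \textit{LSR}, and \textit{RSL} are feasible, and any \textit{CCC} family is ruled out by Proposition~\ref{prop:Necessary Condition of CCC Being Optimal Path}, because disjointness of both cross pairs places $P_i$ and $P_f$ relative to the rhombus in a position incompatible with \textit{CCC} optimality. The three \textit{CSC} paths I would separate with the two critical-orientation arguments of Theorem~\ref{theo:a_11}: the critical initial orientation $\overline{\alpha}$ makes $t_{rsr}$ the switch between \textit{RSR} and \textit{LSR}, so $t_{rsr}>\pi$ selects \textit{LSR}; by the self-symmetry the critical final orientation $\overline{\beta}$ makes $q_{rsr}$ the switch between \textit{RSR} and \textit{RSL}, so $q_{rsr}>\pi$ selects \textit{RSL}; and when both segment angles stay below $\pi$, \textit{RSR} is shortest. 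A point I must check here is that $t_{rsr}>\pi$ and $q_{rsr}>\pi$ cannot hold at once in this regime, so that the three rows are mutually exclusive and exhaustive.

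In the two regimes where exactly one cross pair intersects, the self-symmetry lets me argue only once. Take the regime in which \textit{LSR} is feasible and \textit{RSL} is not. Here I would first exclude \textit{RSR}---its initial and final arcs both exceed $\pi$, as in Theorem~\ref{theo:a_12}---and the non-competing \textit{CCC} family, which is dominated through its common degenerate two-arc curve with the feasible \textit{CSC} path exactly as \textit{RLR} is dominated in Theorem~\ref{theo:a_13}. This leaves \textit{LSR} against \textit{LRL}; decomposing both lengths with the tangent-circle construction used for $S_{13}^2$ gives the switch $S_{14}^1=\mathcal{L}_{lrl}-\mathcal{L}_{lsr}=2(t_{lrl}+q_{lrl})+2t_{lsr}-p_{lsr}-2\pi$, so $S_{14}^1>0$ selects \textit{LSR}. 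Applying $(\alpha,\beta)\mapsto(-\beta,-\alpha)$ carries this regime to the complementary one, turns $S_{14}^1$ into $S_{14}^2=\mathcal{L}_{lrl}-\mathcal{L}_{rsl}$, and yields the \textit{LRL}-versus-\textit{RSL} dichotomy for free.

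Finally, when both cross pairs intersect, \textit{LSR} and \textit{RSL} are infeasible and \textit{RSR} is again excluded because both its end arcs exceed $\pi$, so only the two \textit{CCC} families survive; their difference comes straight from the total lengths as $S_{14}^3=\tfrac12(\mathcal{L}_{rlr}-\mathcal{L}_{lrl})=\alpha-\beta+p_{rlr}-p_{lrl}$, and $S_{14}^3>0$ selects \textit{LRL}, matching $S_{13}^1$. I expect the main obstacle to be the geometric length bookkeeping behind $S_{14}^1$: tracking the shared and differing arc angles of \textit{LSR} and \textit{LRL} through their common tangency so that the telescoping into $t_{lrl}$, $q_{lrl}$, $t_{lsr}$, and $p_{lsr}$ is exact. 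The subsidiary difficulty is the exclusivity and exhaustiveness check in the doubly-disjoint regime; once those are settled, the remaining regimes follow from the propositions and the self-symmetry already in hand.
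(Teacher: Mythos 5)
Your proposal follows essentially the same route as the paper's proof: exclude \textit{LSL}, partition by the two cross-intersection conditions, reduce each regime to a pairwise comparison (via Proposition~\ref{prop:Necessary Condition of CCC Being Optimal Path} and the ``both end arcs exceed $\pi$'' exclusions), and obtain $S_{14}^1$, $S_{14}^2$, $S_{14}^3$ by the same tangent-circle length bookkeeping used for Theorems~\ref{theo:a_11} and~\ref{theo:a_13}. The only notable variations are local: you make the self-symmetry $(\alpha,\beta)\mapsto(-\beta,-\alpha)$ explicit where the paper simply treats the mirror regime ``similarly,'' and you rule out the non-competing \textit{CCC} family via the degenerate-curve domination argument of Theorem~\ref{theo:a_13} where the paper invokes the rhombus criterion of Proposition~\ref{prop:Necessary Condition of CCC Being Optimal Path} --- both yield the same table.
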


\begin{proof}
 For class $a_{14}$, $C_{ir}$ and $C_{fr}$ intersect first. The analysis of shortest path for this class follows a similar approach to that in the long path case. As $P_i$ and $P_f$ move closer, two possible scenarios arise: either $C_{il}$ and $C_{fr}$ intersect first, or $C_{ir}$ and $C_{fl}$ intersect first, as illustrated in Fig. \ref{fig:CIntersectC}. 

  \begin{figure}[htbp]
     \centering
     \subfigure[Case where $C_{il}$ and $C_{fr}$ intersect first]{
         \includegraphics[width=3cm]{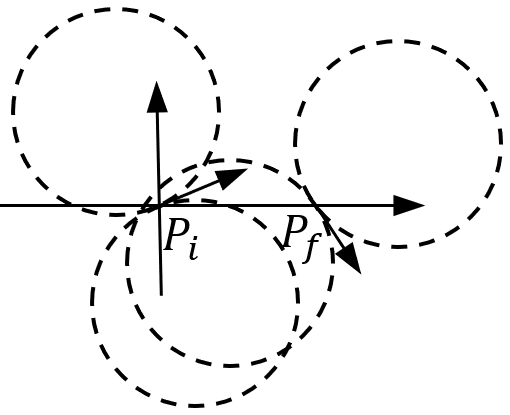}
         \label{fig:CIntersectC1}}
     \subfigure[Case where $C_{ir}$ and $C_{fl}$ intersect first]{
         \includegraphics[width=3cm]{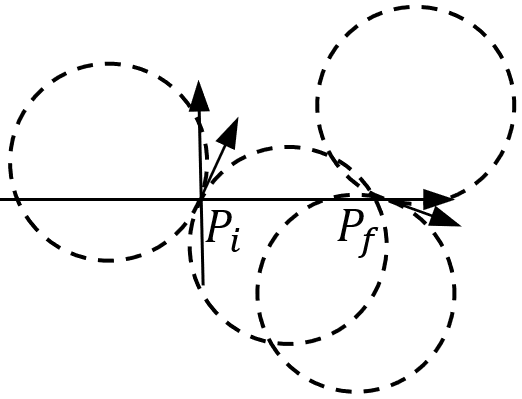}
         \label{fig:CIntersectC2}}
     \caption{Two cases if the initial and final positions get closer}
     \label{fig:CIntersectC}
 \end{figure}
 
 Consider the case where $C_{il}$ and $C_{fr}$ intersect first. It is evident that $P_i$ and $P_f$ lie outside the rhombus formed by $C_{ir}$ and $C_{fr}$. According to proposition \ref{prop:Necessary Condition of CCC Being Optimal Path}, \textit{RLR} path cannot be the shortest. The length of the initial and final segments of \textit{LSL} both exceed a length of $\pi$, leading to $\mathcal{L}_{lsl} > \mathcal{L}_{lsr}$. Similarly, $\mathcal{L}_{rsr}>\mathcal{L}_{lsr}$ under this condition, allowing both \textit{LSL} and \textit{RSR} to be excluded from consideration. The candidates for the shortest path are \textit{LRL} and \textit{LSR}. If $C_{ir}$ and $C_{fl}$ intersect first, the candidates are \textit{LRL} and \textit{RSL}.
 
 Furthermore, if $C_{il}$ intersect with $C_{fr}$ and $C_{ir}$ intersect with $C_{fl}$, then both \textit{LSR} and \textit{RSL} are infeasible. Additionally, the lengths of \textit{RSR} and \textit{LSL} exceed that of type \textit{CCC}. Thus, the shortest path is \textit{CCC} typed. The derivation of the switching functions follows the same procedure as described in the previous section. 
 
\end{proof}

\subsection{Equivalency Group $\mathbb{E}_5$}
Classes $a_{22}$ and $a_{33}$ belong to equivalency group $\mathbb{E}_5$. We demonstrate how to extract the shortest path for class $a_{22}$ as an example.

\begin{theorem} \label{theo:a_22}
  For the short path case, the shortest path corresponding to the class $a_{22}$ may be \textit{RSR, LSL, LSR, RLR, LRL}, as shown in Table \ref{tab:a_22OptimalPath}.
  
  \begin{table}[htbp]
    \caption{Shortest Path Corresponding to Class $a_{22}$}
    \label{tab:a_22OptimalPath}
    \renewcommand{\arraystretch}{1.3}
    \begin{center}
      \begin{tabular}{c|c|c} 
        \Xhline{1.2pt}
				\multicolumn{2}{c|}{Condition}									 	& Shortest Path \\
				\hline
  	  	\multirow{3}*{$\alpha < \beta$} 	& $q_{lsl} > \pi$					& LSR \\
  	  	~							    	& $q_{lsl} < \pi, S_{22}^1 > 0$ 	& RSR \\
  	  	~								& $q_{lsl} < \pi, S_{22}^1 < 0$	& RLR \\
   		 	\hline
   		 	\multirow{3}*{$\alpha > \beta$} 	& $t_{rsr} > \pi$					& LSR \\
    		~								& $t_{rsr} < \pi, S_{22}^2 > 0$	& LSL \\
    		~								& $t_{rsr} < \pi, S_{22}^2 < 0$	& LRL \\
				\Xhline{1.2pt}
	  	\end{tabular}
    \end{center}
  \end{table}
  the switching functions in Table \ref{tab:a_22OptimalPath} are
  \begin{equation}
	\begin{aligned}
	  S_{22}^1 &= 2(p_{rlr} - \pi) - p_{rsr}, \\
	  S_{22}^2 &= 2(p_{lrl} - \pi) - p_{lsl}.
	\end{aligned}
  \end{equation}
\end{theorem}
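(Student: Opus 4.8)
The plan is to exploit the reflection symmetry that the equivalency group $\mathbb{E}_5$ induces on its own representative. Under $(\alpha,\beta)\mapsto(\beta,\alpha)$ the class $a_{22}$ maps to itself, and the topological-equivalence theorem of Section \ref{subsec:Equivalency Group} (the form $\bar{C_f}\bar{C_i}[\beta,\alpha](q,t)$) sends $\mathit{RSR}\leftrightarrow\mathit{LSL}$, $\mathit{RLR}\leftrightarrow\mathit{LRL}$, $\mathit{LSR}\leftrightarrow\mathit{LSR}$, together with $q_{lsl}\leftrightarrow t_{rsr}$ and $S_{22}^1\leftrightarrow S_{22}^2$. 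Since this swap interchanges the regimes $\alpha<\beta$ and $\alpha>\beta$, it suffices to establish the upper branch of Table \ref{tab:a_22OptimalPath}; the lower branch then follows verbatim. The entire argument therefore concentrates on the three candidates $\mathit{RSR}$, $\mathit{LSR}$, $\mathit{RLR}$ under $\alpha<\beta$.

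First I would reduce the candidate set. Because $\alpha,\beta\in[\pi/2,\pi]$ we have $\sin\alpha,\sin\beta>0$, so $|\sin\alpha|+|\sin\beta|=\sin\alpha+\sin\beta$; writing $p_{rsl}^2=d^2-2d(\sin\alpha+\sin\beta)+2\cos(\alpha-\beta)-2$ as a quadratic in $d$, its discriminant reduces to $4-(\cos\alpha+\cos\beta)^2$ and its larger root equals exactly the threshold $d_t$ of \eqref{eq: d_threshold}. Hence $p_{rsl}$ is real only when $d\ge d_t$, i.e. in the long-path case, so $\mathit{RSL}$ is infeasible throughout the short-path case and is discarded at once (explaining its absence from the table). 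Next, reusing the equal-angle identities $\mathcal{L}_{lsl}=\mathcal{L}_{rsr}$, $\mathcal{L}_{lrl}=\mathcal{L}_{rlr}$ at $\alpha=\beta$ together with the sign pattern $\partial_\alpha\mathcal{L}_{lsl}<0<\partial_\beta\mathcal{L}_{lsl}$ and $\partial_\alpha\mathcal{L}_{rsr}>0>\partial_\beta\mathcal{L}_{rsr}$ (these signs persist in the second quadrant, where $\cos\alpha,\cos\beta<0$ only reinforce them), I would conclude that $\alpha<\beta$ forces $\mathcal{L}_{rsr}<\mathcal{L}_{lsl}$ and $\mathcal{L}_{rlr}<\mathcal{L}_{lrl}$. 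This eliminates $\mathit{LSL}$ and $\mathit{LRL}$ and leaves exactly $\mathit{RSR}$, $\mathit{LSR}$, $\mathit{RLR}$.

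Within this set I would separate the two boundaries. For the $\mathit{CSC}/\mathit{CSC}$ boundary, $\mathit{LSR}$ and $\mathit{LSL}$ share the degenerate path $\mathit{LS}$, so the critical final orientation $\overline\beta$ (Fig. \ref{fig:CriticalFinalOrientation}) — equivalently the sign of $q_{lsl}-\pi$ — governs whether $\mathcal{L}_{lsr}\lessgtr\mathcal{L}_{lsl}$. When $q_{lsl}<\pi$ one gets the clean chain $\mathcal{L}_{lsr}>\mathcal{L}_{lsl}>\mathcal{L}_{rsr}$, excluding $\mathit{LSR}$, after which $\mathit{RSR}$ versus $\mathit{RLR}$ is decided by $S_{22}^1=\mathcal{L}_{rlr}-\mathcal{L}_{rsr}=2(p_{rlr}-\pi)-p_{rsr}$; this switching function is obtained by the same rhombus / common-tangent-circle construction used for $S_{11}^1$ in Theorem \ref{theo:a_11} (centres $O_{ir},O_{fr},O_{tl}$ with $AO_{ir}\,\|\,CO_{tl}\,\|\,BO_{fr}$), so I would transcribe that computation directly.

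The step I expect to be the main obstacle is the regime $q_{lsl}>\pi$, where Table \ref{tab:a_22OptimalPath} asserts $\mathit{LSR}$ optimal with no further switching test — a stronger claim than the analogous $a_{11}$ entry, which still pits $\mathit{LSR}$ against $\mathit{LRL}$ through $S_{11}^4$. Here $\mathcal{L}_{lsr}<\mathcal{L}_{lsl}$ alone does not order $\mathit{LSR}$ against $\mathit{RSR}$ or $\mathit{RLR}$, so I must rule out both R-family competitors directly. I would exclude $\mathit{RSR}$ by the arc-length device used repeatedly in Theorems \ref{theo:a_12}--\ref{theo:a_14}: $q_{lsl}>\pi$ places the configuration in the regime where the same-turn $\mathit{CSC}$ detour is dominated by the crossing path, yielding $\mathcal{L}_{rsr}>\mathcal{L}_{lsr}$. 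I would exclude $\mathit{RLR}$ by showing that $q_{lsl}>\pi$ forces the relative position of $P_i,P_f$ to violate the necessary condition of Proposition \ref{prop:Necessary Condition of CCC Being Optimal Path} (points outside the rhombus, or opposite sides with $\varphi<\pi/2$), so that $\mathit{RLR}$ cannot be shortest. Verifying these two implications — and confirming, as a secondary matter, that the partial-derivative signs and the rhombus construction transfer to the second quadrant without sign change — is where the real work lies; the remaining bookkeeping then assembles into Table \ref{tab:a_22OptimalPath}, and the $\alpha>\beta$ branch is inherited through the swap symmetry of the first paragraph.
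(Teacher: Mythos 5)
Your overall architecture coincides with the paper's: exclude \emph{LSL}/\emph{LRL} (resp.\ \emph{RSR}/\emph{RLR}) via the equality of lengths at $\alpha=\beta$ and the monotonicity inherited from class $a_{11}$, discriminate among the surviving \emph{CSC} types by the critical orientations, reuse the $S_{11}^1$-type tangent-circle construction to get $S_{22}^1=2(p_{rlr}-\pi)-p_{rsr}$, and invoke Proposition~\ref{prop:Necessary Condition of CCC Being Optimal Path} to eliminate the \emph{CCC} competitor in the regime where \emph{LSR} wins. Two things you do are genuine improvements over the paper's own write-up: the discriminant computation showing $p_{rsl}^2<0$ whenever $d<d_t$ for second-quadrant angles (the paper never explains why \emph{RSL} is absent from Table~\ref{tab:a_22OptimalPath}), and the explicit reduction of the $\alpha>\beta$ branch to the $\alpha<\beta$ branch via the $\bar{C_f}\bar{C_i}[\beta,\alpha](q,t)$ conjugacy, which the paper only gestures at with ``similarly.''

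The gap is exactly where you flag it: the cell ``$\alpha<\beta$, $q_{lsl}>\pi \Rightarrow$ \emph{LSR}.'' You correctly observe that $q_{lsl}>\pi$ only orders \emph{LSR} against the already-excluded \emph{LSL}, so you still owe $\mathcal{L}_{lsr}<\mathcal{L}_{rsr}$ and the exclusion of \emph{RLR}, and you leave both as sketches (``the same-turn \emph{CSC} detour is dominated'' is an assertion, not an argument). The paper closes this differently: in its prose the discriminating condition for the $\alpha<\beta$ branch is $t_{rsr}>\pi$, not $q_{lsl}>\pi$ --- i.e.\ the critical \emph{initial} orientation $\overline{\alpha}$ shared by the degenerate \emph{SR} path --- which makes $\mathcal{L}_{lsr}<\mathcal{L}_{rsr}$ immediate by the same argument as in Theorem~\ref{theo:a_11}; it then argues that $t_{rsr}>\pi$ forces $P_i,P_f$ onto opposite sides of the rhombus with $\varphi>\pi/2$, so Proposition~\ref{prop:Necessary Condition of CCC Being Optimal Path} disposes of \emph{RLR}. (Note that the paper's Table~\ref{tab:a_22OptimalPath} and its proof text disagree on whether $q_{lsl}$ or $t_{rsr}$ labels the $\alpha<\beta$ branch --- one of the two is transposed; by following the table you took on the harder, and in your draft unproved, obligation of relating $q_{lsl}>\pi$ to $\mathcal{L}_{rsr}>\mathcal{L}_{lsr}$.) To complete your proof, either switch the discriminator to $t_{rsr}$ as in the paper's prose and supply the opposite-sides/$\varphi>\pi/2$ geometry, or prove the equivalence of the two conditions for this class; as written the \emph{LSR} cell is not established.
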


\begin{proof}
 Similar to class $a_{11}$, the equalities $\mathcal{L}_{rsr} = \mathcal{L}_{lsl}$ and $\mathcal{L}_{rlr} = \mathcal{L}_{lrl}$ hold, when $\alpha=\beta$. Hence, the comparison between $\mathcal{L}_{rsr}$ and $\mathcal{L}_{rlr}$ as well as that between $\mathcal{L}_{lsl}$ and $\mathcal{L}_{lrl}$ can be directly referenced from class $a_{11}$.
 
 We can also refer the derivation of the switching functions to determine the path \textit{LSR} and \textit{RSR}, as well as those for \textit{LSR} and \textit{LSL}, from class $a_{11}$. Let $\overline{\alpha}$ and $\overline{\beta}$ denote the critical orientation. The key distinction is that when $\alpha < \beta$, ensuring $\mathcal{L}_{lsr} < \mathcal{L}_{rsr}$ (i.e., $t_{rsr} > \pi$) requires $P_f$ to be positioned close to $P_i$, causing the angle $\varphi$ in Fig. \ref{fig:PossibleCCCPath} to exceed $\frac{\pi}{2}$. This indicates that $P_i$ and $P_f$ lie on the opposite side of the rhombus. According to proposition \ref{prop:Necessary Condition of CCC Being Optimal Path}, \textit{RLR} cannot be the shortest path. Similarly, if $\alpha > \beta$ and $q_{lsl} > \pi$, \textit{LRL} cannot be the shortest path.
 
\end{proof}

\subsection{Equivalency Group $\mathbb{E}_6$}
Classes $a_{23}$ and $a_{32}$ belong to equivalency group $\mathbb{E}_6$. We demonstrate how to extract the shortest path for class $a_{23}$ as an example.

\begin{theorem} \label{theo:a_23}
  For the short path case, the shortest path corresponding to the class $a_{23}$ may be \textit{RSR, RSL, LSR, LRL}, as shown in Table \ref{tab:a_23OptimalPath}.
  
  \begin{table}[htbp]
    \caption{Shortest Path Corresponding to Class $a_{23}$}
    \label{tab:a_23OptimalPath}
    \renewcommand{\arraystretch}{1.3}
    \begin{center}
      \begin{tabular}{c|c|c}
      	\Xhline{1.2pt}
				\multicolumn{2}{c|}{Condition}			& Shortest Path \\
				\hline
				\multicolumn{2}{c|}{\makecell[c]{$C_{il} \cap C_{fr} = \emptyset$, \\ $C_{ir} \cap C_{fl} = \emptyset$}}						& RSR \\
				\hline
				\multirow{2}*{\makecell[c]{$C_{il} \cap C_{fr} \ne \emptyset$, \\ $C_{ir} \cap C_{fl} = \emptyset$}}			& $q_{rsr} < 0$  & RSR \\
				~																			& $q_{rsr} > 0$		& RSL \\
				\hline
				\multirow{2}*{\makecell[c]{$C_{il} \cap C_{fr} = \emptyset$, \\ $C_{ir} \cap C_{fl} \ne \emptyset$}}			& $t_{rsr} < 0$  & RSR \\
				~																			& $t_{rsr} > 0$		& LSR \\
				\hline
				\multicolumn{2}{c|}{\makecell[c]{$C_{il} \cap C_{fr} \ne \emptyset$, \\ $C_{ir} \cap C_{fl} \ne \emptyset$}}									& LRL \\
				\Xhline{1.2pt}
			\end{tabular}
    \end{center}
	\end{table}  	
\end{theorem}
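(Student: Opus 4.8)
The plan is to specialize the machinery already built for classes $a_{11}$ through $a_{22}$ to the quadrant window $\alpha\in[\pi/2,\pi]$, $\beta\in[\pi,3\pi/2]$, which forces $\alpha<\beta$ on all of $a_{23}$. First I would discard the two path types missing from Table~\ref{tab:a_23OptimalPath}. For \textit{LSL}, I reuse the monotonicity facts on $\mathcal{L}_{lsl}$ and $\mathcal{L}_{rsr}$ from the $a_{11}$ proof: since $\mathcal{L}_{rsr}=\mathcal{L}_{lsl}$ on the diagonal $\alpha=\beta$ and the difference $\mathcal{L}_{rsr}-\mathcal{L}_{lsl}$ is increasing in $\alpha$ and decreasing in $\beta$, the strict inequality $\alpha<\beta$ valid throughout $a_{23}$ yields $\mathcal{L}_{rsr}<\mathcal{L}_{lsl}$, so \textit{LSL} is always dominated by \textit{RSR}. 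For \textit{RLR}, I invoke Proposition~\ref{prop:Necessary Condition of CCC Being Optimal Path}: the governing rhombus has the centres $O_{ir}$ and $O_{fr}$ among its vertices, and for this quadrant placement $P_i$ and $P_f$ fall outside it, so the minor-arc \textit{RLR} cannot be optimal while the major-arc version never is. Hence \textit{RLR} is excluded and \textit{LRL} is the only surviving \textit{CCC} candidate.

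Next I set up the feasibility dichotomy exactly as in the $a_{12}$ proof: \textit{RSL} admits a valid inner common tangent iff $C_{ir}\cap C_{fl}=\emptyset$, and \textit{LSR} iff $C_{il}\cap C_{fr}=\emptyset$. These two independent conditions generate the four rows of Table~\ref{tab:a_23OptimalPath}. In the top row, where neither cross-pair meets, both \textit{LSR} and \textit{RSL} are available, and I claim \textit{RSR} beats both. This uses the critical-orientation device from $a_{11}$: the critical initial orientation $\overline{\alpha}$ (the heading tangent to $O_{fr}$) governs the \textit{RSR}/\textit{LSR} comparison through the sign of $t_{rsr}$, while the critical final orientation $\overline{\beta}$ (the heading tangent to $O_{ir}$) governs the \textit{RSR}/\textit{RSL} comparison through the sign of $q_{rsr}$; in this row both sign tests point to \textit{RSR}.

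In the two mixed rows exactly one of \textit{LSR}, \textit{RSL} survives, and the same sign test selects between it and \textit{RSR}: the sign of $q_{rsr}$ when only $C_{ir}\cap C_{fl}=\emptyset$ (deciding \textit{RSR} versus \textit{RSL}), and the sign of $t_{rsr}$ when only $C_{il}\cap C_{fr}=\emptyset$ (deciding \textit{RSR} versus \textit{LSR}). Finally, in the bottom row both cross-pairs meet, so \textit{LSR} and \textit{RSL} are infeasible; arguing as in $a_{12}$, the surviving \textit{CSC} paths \textit{RSR} and \textit{LSL} each carry an initial and a final arc exceeding $\pi$ and are therefore outrun by the \textit{CCC} family. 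Since \textit{RLR} has already been removed by Proposition~\ref{prop:Necessary Condition of CCC Being Optimal Path}, \textit{LRL} is the shortest.

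The delicate point is that the switching thresholds here are the \emph{zero}-crossings of $t_{rsr}$ and $q_{rsr}$, rather than the $\pi$-crossings that appeared for $a_{11}$. Pinning this down requires tracking the arctangent branch and the $\{\mathrm{mod}~2\pi\}$ reduction in the segment-length formulas to confirm that the tangency condition defining $\overline{\alpha}$ (resp. $\overline{\beta}$) coincides with $t_{rsr}=0$ (resp. $q_{rsr}=0$) precisely on the window $\alpha\in[\pi/2,\pi]$, $\beta\in[\pi,3\pi/2]$. I expect this, together with verifying the outside-the-rhombus placement of $P_i$ and $P_f$ that eliminates \textit{RLR}, to be the main obstacle: both facts are genuinely quadrant-dependent and are exactly where a naive transcription of the $a_{11}$ argument would fail.
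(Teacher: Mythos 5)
Your proposal follows essentially the same route as the paper's proof: exclude \textit{LSL} via $\mathcal{L}_{rsr}<\mathcal{L}_{lsl}$ and \textit{RLR} via Proposition~\ref{prop:Necessary Condition of CCC Being Optimal Path} with $P_i,P_f$ outside the rhombus, generate the four rows from the two cross-pair intersection conditions, decide the \textit{CSC} comparisons by the critical orientations $\overline{\alpha},\overline{\beta}$ through the signs of $t_{rsr}$ and $q_{rsr}$, and conclude \textit{LRL} when all four circles intersect because \textit{RSR}'s end arcs then exceed $\pi$. The subtlety you flag about the zero-crossing (rather than $\pi$-crossing) thresholds is real but is likewise left implicit in the paper, so your argument is at the same level of rigor as the original.
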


\begin{proof}
 Candidates \textit{LSL} and \textit{RLR} are excluded from consideration for class $a_{23}$. Since $\mathcal{L}_{rsr} < \mathcal{L}_{lsl}$ holds \cite{shkel2001classification}, \textit{LSL} also cannot be the shortest path. Additionally, $P_i$ and $P_f$ are both outside of the rhombus for candidate \textit{RLR}. According to proposition \ref{prop:Necessary Condition of CCC Being Optimal Path}, \textit{RLR} cannot be the shortest path.

 For class $a_{23}$, $C_{ir}$ and $C_{fr}$ are the first to intersect. If only these two circles intersect, the conclusion drawn for the long path case remain valid, making \textit{RSR} the shortest path in this scenario.

 When the distance between the initial and final positions is relatively large, the conditions $\alpha > \overline{\alpha}$ and $\beta < \overline{\beta}$ hold. However, this may not necessarily be the case if the distance is short. As $P_i$ and $P_f$ move closer, two possible scenarios arise: either $C_{il}$ and $C_{fr}$ intersect first or, $C_{ir}$ and $C_{fl}$ intersect first. In the first scenario, let $\overline{\beta}$ be the critical final orientation. $\mathcal{L}_{rsr} > \mathcal{L}_{rsl}$ if $\beta > \overline{\beta}$; $\mathcal{L}_{rsr} < \mathcal{L}_{rsl}$ if $\beta < \overline{\beta}$. In the second scenario, let $\overline{\alpha}$ be the critical initial orientation. $\mathcal{L}_{lsr} > \mathcal{L}_{rsr}$ if $\alpha > \overline{\alpha}$; $\mathcal{L}_{lsr} < \mathcal{L}_{rsr}$ if $\alpha < \overline{\alpha}$.

 Candidates \textit{LSR} and \textit{RSL} are infeasible if all the four circles intersect. Since $C_{fr}$ is on the left of $C_{ir}$, the lengths of the first and final segments of \textit{RSR} both exceed a length of $\pi$, thus \textit{RSR} cannot be the shortest path. Therefore, the shortest path can only be \textit{LRL}.
\end{proof}

\subsection{An example of orthogonal transformation}

By employing the orthogonal transformation detailed in Subsection \ref{subsec:Equivalency Group}, the optimal solutions for all 16 cases can be efficiently obtained. To illustrate, consider class $a_{44}$ as an example. Classes $a_{44}$ and $a_{11}$ belong to the same equivalency group. Let us denote the initial and final orientation angle pair for class $a_{11}$ as $(\alpha, \beta)$. The corresponding angle pair for class $a_{44}$ is $(-\alpha, -\beta)$. Under this condition, the topologically equivalent path for $(\alpha, \beta)$ is obtained by transforming the initial and final orientations to their conjugate forms.

For example, in class $a_{11}$, if the condition $C_{ir} \cap C_{fl} = \emptyset$ holds, the shortest path is designated as \textit{RSL}. Therefore, for class $a_{44}$, under the analogous condition $\bar{C}_{ir} \cap \bar{C}_{fl} = \emptyset$, the shortest path becomes \textit{LSR}. This result arises due to the symmetry and conjugacy between the two classes within the equivalency group. Table \ref{tab:a_44OptimalPath} summarizes the optimal path for class $a_{44}$.

\begin{table}[htbp]
    \caption{Shortest Path Corresponding to Class $a_{44}$}
    \label{tab:a_44OptimalPath}
    \renewcommand{\arraystretch}{1.3}
    \begin{center}
      \begin{tabular}{c|c|c|c} 
        \Xhline{1.2pt}
		\multicolumn{3}{c|}{Condition} & Shortest Path \\ 
		\hline
		\multicolumn{3}{c|}{$C_{il} \cap C_{fr} = \emptyset$} & RSL \\
		\hline
		\multirow{6}*{$C_{il} \cap C_{fr} \ne \emptyset$} & \multirow{3}*{$\alpha > \beta$} & $t_{lsl} < \pi, S_{44}^1 > 0$ & LSL \\
		~							 &~								& $t_{lsl} > \pi, S_{44}^2 > 0$ & RSL \\
		~							 &~								&     Otherwise					  & LRL \\
		\cline{2-4}
		~							 &\multirow{3}*{$\alpha < \beta$} & $q_{rsr} < \pi, S_{44}^3 > 0$ & RSR \\
		~							 &~								& $q_{rsr} > \pi, S_{44}^4 > 0$ & RSL \\
		~							 &~								&     Otherwise					  & RLR \\ 
		\Xhline{1.2pt}
	  \end{tabular}
    \end{center}
  \end{table}

  The switching functions in Table \ref{tab:a_44OptimalPath} are
  \begin{equation}
    \begin{aligned}
      S_{44}^1 &= 2(p_{lrl} - \pi) - p_{lsl}, \\
	  S_{44}^2 &= 2(t_{lrl} + q_{lrl}) - (p_{rsl} + 2q_{rsl}) + 2\pi, \\
	  S_{44}^3 &= 2(p_{rlr} - \pi) - p_{rsr}, \\
	  S_{44}^4 &= 2(t_{rlr} + q_{rlr}) - (p_{rsl} + 2t_{rsl}) + 2\pi.
    \end{aligned}
  \end{equation}

\section{Result}
\label{sec:Experimental Results}
The scheme of identifying the shortest Dubins path for the \textit{short path case} between two configurations is derived without explicitly calculating lengths of all candidates of the Dubins set. In this section, the proposed method is tested and compared with the traditional approach in terms of the number of segment lengths computed and the overall required time. The method was conducted in C++ and executed on an Intel Core i5-13500 CPU with 16 GB RAM.

We demonstrate how the proposed method can be employed to efficiently identify the shortest path. Given the initial and final positions and orientations, the method first determines whether the problem falls into the \textit{long path case} or the \textit{short path case}. If it belongs to the \textit{short path case}, the method further identifies the specific class to which it corresponds. Finally, the corresponding table is utilized to determine the type of the shortest curve. For instance, consider the initial and final configurations are $(0, 0, \pi / 4)$ and $(1, 0, 3\pi / 4)$, respectively. Based on (\ref{eq: d_threshold}), this case falls into the \textit{short path case}. Subsequently, according to the orientations, it is classified as $a_{14}$ class. By referencing Table \ref{tab:a_14OptimalPath}, the shortest path is determined to be of type \textit{LRL}.

To find the shortest candidate, the proposed method requires the explicit computation of only 3.1 path segment lengths on average. For configurations belonging to $\mathbb{E}_6$, a single segment length is sufficient to determine the shortest Dubins path pattern. In the worst-case scenario, no more than seven segment lengths need to be evaluated. In the contrast, the traditional approach evaluates all 18 ($3 \times 6$) segments. We generate $10^6$ random pairs of initial and final configuration queries. Using the proposed method, the shortest Dubins path pattern is identified in an average of 0.47 \textmu s, compared to 1.22 \textmu s with the conventional method. These results demonstrate that our approach significantly improves efficiency in identifying the shortest Dubins path pattern, especially in the case of numerous repeated computation.


\section{Conclusion}
\label{sec:CONCLUTIONS}
While computing the length of a single Dubins path involves relatively low computational cost, identifying the shortest path among all possible Dubins path types remains a nontrivial challenge. This paper proposes an improved method for determining the shortest path pattern within the Dubins set, particularly for cases involving two closely spaced configurations. The proposed approach simplifies the problem by systematically reducing the set of candidate paths that need to be evaluated. In contrast to conventional methods—which compute and compare the lengths of all possible path types—our method significantly reduces computational complexity. Given that many motion planning algorithms are built upon Dubins path formulations, the proposed technique has the potential to substantially enhance their overall efficiency.

Future work will focus on refining Dubins paths by incorporating smooth transitions to address practical limitations in motion execution. While Dubins paths ensure minimal travel distance, their discontinuous curvature can result in jerky movements, posing challenges for motion controllers tasked with accurate path tracking. To improve real-world applicability, we aim to develop methods that smooth the curvature while preserving near-optimal path length. Additionally, this research can be extended to time-optimal trajectory planning, wherein optimizing the velocity profile allows a vehicle to traverse the path in the shortest possible time.

\bibliographystyle{IEEEtran}
\bibliography{reference}
\end{document}